\newcommand\eqdef{\mathrel{\overset{\makebox[0pt]{\mbox{\normalfont\ssmall\sffamily def}}}{=}}}
\DeclarePairedDelimiter{\ceil}{\lceil}{\rceil}
\def\squareforqed{\leavevmode\hbox to.77778em{\hfil\vrule\vbox to.675em{\hrule width.6em\vfil\hrule}\vrule\hfil}} % We want better square
\newcommand\mae{\textsl{\ae}}  % Math \ae
\newtheorem{theorem}{Theorem}
\newtheorem{corollary}[theorem]{Corollary}
\newtheorem{definition}[theorem]{Definition}
\newtheorem{lemma}[theorem]{Lemma}
\theoremstyle{remark}
\newtheorem*{remark}{Remark}
\newenvironment{repeatthm}[1]{

	\smallskip \noindent
	Recall \textbf{\Cref{#1}:} \begin{em}}{\end{em} \\ \smallskip \noindent}
\date{}
\title{Performance of Bounded-Rational Agents With the Ability to Self-Modify\thanks{Supported by Grant Number 16582, Basic Algorithms Research Copenhagen
(BARC), from the VILLUM Foundation.}
}
\author{
	Jakub Tětek\\
	\texttt{j.tetek@gmail.com}\\
	BARC, Univ. of Copenhagen
	\and
	Marek Sklenka\\
	\texttt{sklenka.marek@gmail.com}\\
	University of Oxford
	\and
	Tomáš Gavenčiak\\
	\texttt{gavento@gmail.com}
}
\begin{document}
\maketitle
\begin{abstract}
Self-modification of agents embedded in complex environments is hard to avoid, whether it happens via direct means (e.g. own code modification) or indirectly (e.g. influencing the operator, exploiting bugs or the environment). It has been argued that intelligent agents have an incentive to avoid modifying their utility function so that their future instances work towards the same goals. %However, it is not clear whether this also applies in non-dualistic scenarios, where the agent is embedded in the environment.% The problem of self-modification safety is raised by Bostrom in Superintelligence (2014) in the context of safe AGI deployment.

Everitt et al. (2016) formally show that providing an option to self-modify is harmless for perfectly rational agents. We show that this result is no longer true for agents with bounded rationality. In such agents, self-modification may cause exponential deterioration in performance and gradual misalignment of a previously aligned agent. We investigate how the size of this effect depends on the type and magnitude of imperfections in the agent's rationality (1-4 below). We also discuss model assumptions and the wider problem and framing space.

We examine four ways in which an agent can be bounded-rational: it either \textit{(1)} doesn't always choose the optimal action, \textit{(2)} is not perfectly aligned with human values, \textit{(3)} has an inaccurate model of the environment, or \textit{(4)} uses the wrong temporal discounting factor. We show that while in the cases \textit{(2)-(4)} the misalignment caused by the agent's imperfection does not increase over time, with \textit{(1)} the misalignment may grow exponentially.

\end{abstract}

\section{Introduction}
We face the prospect of creating superhuman (or otherwise very powerful) AI systems in the future where those systems hold significant power in the real world \cite{Superintelligence,russell2019human}. Building up theoretical foundations for the study and design of such systems gives us a better chance to align them with our long-term interests. In this line of work, we study agent-like systems, i.e. systems optimizing their actions to maximize a certain utility function -- the framework behind the current state-of-the-art reinforcement learning systems and one of the major proposed models for future AI systems\footnote{Other major models include e.g. comprehensive systems of services \cite{drexler2019reframing} and "Oracle AI" or "Tool AI"~\cite{OracleAI}. However, there are concerns and ongoing research into the emergence of agency in these systems~\cite{omohundro2008basic,miller2020agi}.}.

If strong AI systems with the ability to act in the real world are ever deployed\footnote{Proposals to prevent this include e.g. boxing~\cite{Superintelligence} but as e.g. \citet{Leakproofing} argues, this may be difficult or impractical.}, it is very likely that they will have some means of deliberately manipulating their own implementation, either directly or indirectly (e.g. via manipulating the human controller, influencing the development of a future AI, exploiting their own bugs or physical limitations of the hardware, etc). While the extent of those means is unknown, even weak indirect means could be extensively exploited with sufficient knowledge, compute, modelling capabilities and time.

\smallskip
\citet{omohundro2008basic} argues that every intelligent system has a fundamental drive for goal preservation, because when the future instance of the same agent strives towards the same goal, it is more likely that the goal will be achieved. Therefore, Ohomundro argues, a rational agent should never modify into an agent optimizing different goals.

\citet{everitt} examine this question formally and arrive at the same conclusion: that the agent preserves its goals in time (as long as the agent's planning algorithm anticipates the consequences of self-modifications and uses the current utility function to evaluate different futures).\footnote{\citet{everitt}'s results hold independent of the length of the time horizon or temporal discounting (by simple utility scaling).
} However, Everitt's analysis assumes that the agent is a perfect utility maximizer (i.e. it always takes the action with the greatest expected utility), and has perfect knowledge of the environment. These assumptions are probably unattainable in any complex environment.

\smallskip
To address this, we present a theoretical analysis of a self-modifying agent with imperfect optimization ability and incomplete knowledge. We model the agent in the standard cybernetic model where the agent can be bounded-rational in two different ways. Either the agent makes suboptimal decisions (is a bounded-optimization agent) or has inaccurate knowledge. We conclude that imperfect optimization can lead to exponential deterioration of alignment through self-modification, as opposed to bounded knowledge, which does not result in future misalignment. An informal summary of the results is presented below.

\smallskip
Finally, we explicitly list and discuss the underlying assumptions that motivate the theoretical problem and analysis. In addition to clearly specifying the \emph{scope of conclusions}, the explicit problem assumptions can be used as a rough axis to map the space of viable research questions in the area; see Sections~\ref{sec:assumptions} and~\ref{sec:conclusions}.

\subsection{Summary of our results} \label{sec:summary}

The result of \citet{everitt} could be loosely interpreted to imply that agents with close to perfect rationality would either prefer not to self-modify, or would self-modify and only lose a negligible target value. 

We show that when we relax the assumption of perfect rationality, their result no longer applies. The bounded-rational agent may prefer to self-modify given the option and in doing so, become less aligned and lose a significant part of the attainable value according to its original goals.

\smallskip
We use the difference between the attainable and attained expected future value at an (arbitrarily chosen) future time point as a proxy for the degree of the agent's misalignment at that time. Specifically, for a future time $t$, we consider the value attainable from time $t$ (after the agent already ran and self-modified for $t$ time units), and we estimate the loss of value $f^t$ relative to the non-modified agent in the same environment state. Note that $f^t$ is not pre-discounted by the previous $t$ steps. See Section~\ref{sec:preliminaries} for formal definitions and Section~\ref{sec:assumptions} for motivation and discussion.

\smallskip
We consider four types of deviation from perfect rationality, see Section~\ref{sec:def_bounded_rat} for formal definitions.

\begin{itemize}
\item \emph{$\epsilon$-optimizers} make suboptimal decisions.
\item \emph{$\epsilon$-misaligned agents} have inaccurate knowledge of the human utility function.
\item \emph{$\epsilon$-ignorant agents} have inaccurate knowledge of the environment.
\item \emph{$\epsilon$-impatient agents} have inaccurate knowledge of the correct temporal discount function.
\end{itemize}

Note that for the sake of simplicity, we use a very simple model of bounded rationality where the errors are simply bounded by the error parameters $\epsilon_\bullet$; this has to be taken into account when interpreting the results. However, we suspect that the asymptotic dependence of value loss on the size of errors and time would be similar for a range of natural, realistic models of bounded rationality.

\medskip
\noindent
\textbf{Informal result statements}

\medskip
\noindent
\emph{Self-modifying $\epsilon$-optimizers} may deteriorate in future alignment and performance exponentially over time, losing exponential amount of utility compared to $\epsilon$-optimizers that do not self-modify. We show upper and tight lower bounds (by a constant) on the worst-case value loss in Theorem~\ref{thm:policy_modification}. As we decrease $\gamma$ (increase discounting), the rate at which the agent's performance deteriorates increases and the possibility of self-modification becomes a more serious problem. 

Our analysis of bounded-optimization agents is a generalization of Theorem 16 from \citet{everitt} in the sense that their result can be easily recovered by a basic measure-theoretic argument.

\medskip
\noindent
\emph{Self-modifying $\epsilon_u$-misaligned, $\epsilon_\rho$-ignorant, or $\epsilon_\gamma$-impatient perfect optimizers} can only lose the same value as non-self-modifying agents with the same irrationality bounds. This also holds for any combination of the three types of bounded knowledge. We give tight upper and lower bounds (up to a constant factor) for the worst-case performance. See Section~\ref{sec:optimization_bounds} for details.

This implies that unlike bounded-optimization agents, the performance of perfect-optimization bounded-knowledge agents does not deteriorate in time. This is because bounded-knowledge agents continue to take optimal actions with respect to their almost correct knowledge and do not self-modify in a way that would worsen their performance in their view. Therefore, the possibility of self-modification seems less dangerous in the case of bounded-knowledge agents than in the case of bounded-optimization agents.

\medskip
\noindent
A \emph{self-modifying agent with any combination of the four irrationality types} may lose value exponential in the time step $t$ when the agent optimization error parameter $\epsilon_o > 0$. We again give tight (up to a constant factor) lower bounds on the worst-case performance of such agents. See Section~\ref{sec:combining} for details.

\smallskip
Our results do not imply that every such agent will actually perform this poorly but the prospect of exponential deterioration is worrying in the long-term, even if it happens at a much slower speed than suggested by our results. We focus on worst-case analysis because it tells us whether we can have formal guarantees of the agent's behaviour -- a highly desirable property for powerful real-world autonomous systems, including a prospective AGI (artificial general intelligence) or otherwise strong AIs.

\medskip
\noindent
\textbf{Overview of formal results.} Here we summarize how much value the different types of bounded-rational agents may lose via misalignment. Note that the maximal attainable discounted value is at most $\frac{1}{1-\gamma}$ and the losses should be considered relative to that, or to the maximum attainable value in concrete scenarios. Otherwise, the values for different value of $\gamma$ are incomparable.
In all cases, the worst-case lower and upper bounds are tight up to a constant.

\medskip
\noindent
\emph{$\epsilon$-optimizer agents} -- bounded optimization, after $t$ steps of possible self-modification (Theorem~\ref{thm:policy_modification})
$$f^t_\text{opt}(\epsilon, \gamma) = \min(\frac{\epsilon}{\gamma^{t-1}}, \frac{1}{1-\gamma})$$ 

\medskip
\noindent
\emph{$\epsilon$-misaligned agents} -- inaccurate utility (Theorem~\ref{thm:inaccurate_utility})
$$f_\text{util}(\epsilon, \gamma) = \frac{2\epsilon}{1-\gamma}$$

\medskip
\noindent
\emph{$\epsilon$-ignorant agents} -- inaccurate belief (Theorem~\ref{thm:inaccurate_belief})
$$f_\text{bel}(\epsilon, \gamma) = \frac{2}{1-\gamma} - \frac{2}{1-\gamma(1-\epsilon)}$$

\smallskip
\noindent
\emph{$\epsilon$-impatient agents} -- inaccurate discounting (Theorem~\ref{thm:imprecise_discount}) Here $\gamma^*$ is the correct discount factor and $\gamma$ is the agent's incorrect discount factor.
$$f_\text{disc}(\gamma, \gamma^*) \approx \frac{2 {\gamma^*}^\frac{1}{\lg \gamma}-1}{1-\gamma^*}$$
    
\section{Assumptions and rationale}\label{sec:assumptions}

Both the statement of the problem and its relevance to AI alignment rest on a set of assumptions listed below. While this list is non-exhaustive, we try to cover the main implicit and explicit choices in our framing, and the space of alternatives. This is largely in hope of eventually finding a better, more robust theoretical framework for solving agent self-modification within the context of AI alignment, but even further negative results in the space would inform our intuitions on what aspects of self-modification make the problem harder.

\smallskip
We propose consideration of various assumptions as a framework for thinking about prospective \emph{realistic agent models that admit formal guarantees}. We invite further research and generalizations in this area, one high-level goal being to map a part of the space of agent models and assumptions that do or do not permit guarantees, eventually finding agent models that do come with meaningful guarantees. Further negative results would inform our intuitions on what aspects of the problems make it harder.

\begin{enumerate}[(i)]
    \item \emph{Bounded rationality model.} In the models of $\epsilon$-bounded-rational agents defined in Section~\ref{sec:bounded_optimization}, $\epsilon$ is generally an upper bound on the size of the optimization or knowledge error. One interpretation of our results is that value drift can happen even if the error is bounded at every step. One could argue that a more realistic scenario would assume some distribution of the size of the errors, assuming larger errors less likely or less frequent; see discussion below and in Section~\ref{sec:conclusions}.
    \item \emph{Unlimited self-modification ability.} We assume the agent is able to perform any self-modification at any time. This models the worst-case scenario when compared to a limited but still perfectly controlled self-modification. However, embedded (non-dualistic) agents in complex environments may chieve almost-unlimited self-modification from a limited ability, e.g. over a longer time span; see e.g. \cite{EmbeddedAgency}. We model the agent's self-modifications as orthogonal to actions in the environment.
    \item \emph{Modification-independence.} We assume that the agent's utility function does not explicitly reward or punish self-modifications. We also assume that self-modifications do not have any direct effect on the environment. This is captured by \Cref{def:mod_indep}.
    \item \emph{No corrigibility mechanisms.} We do not consider systems that would allow human operators to correct the system's goals, knowledge or behaviour. The problem of robust strong AI corrigibility is far from solved today and this paper can be read as a further argument for substantially more research in this direction.
    \item \emph{Worst-case analysis and bound tightness.} We focus on worst-case performance guarantees in abstracted models rather than e.g. full distributional analysis, and we show that our worst-case bounds are attainable (up to constant factors) under certain agent behaviour. Note this approach may turn out as too pessimistic or even impossible in some settings (e.g. quantum physics).
    \item \emph{Bounded value attainable per time unit.} We assume the agent obtains instantaneous utility between 0 and 1 at each time step. This is not an arbitrary choice: A constant bound on instantaneous value can be normalized to this interval. Instantaneous values bounded by a function of time $U(t)<\mu^t$ can be pre-discounted when $\gamma\mu<1$, and generally lead to infinite future values otherwise, which we disallow here to avoid foundational problems.
    \item \emph{Temporal value discounting.} We assume the agent employs some form of temporal value discounting. This could be motivated by technical or algorithmic limitations, increasing uncertainty about the future, or to avoid issues with incomparable infinite values of considered futures (see \citet{Bostrom2011} for a discussion of infinite ethics). Discounting, however, contrasts with the long-termist view; see the discussion below.
    \item \emph{Exponential discounting.} Our model assumes the agent discounts future utility exponentially, a standard assumption in artificial intelligence and the only time-invariant discounting schema \cite{strotz1955myopia} leading to consistent preferences over time.
    \item \emph{Unbounded temporal horizons.} Our analysis focuses on the long-term behaviour of the agent, in particular stability and performance from the perspective of future stakeholders (sharing the original utility function). Note that our results also to some extent apply to finite-horizon but long-running systems.

    Temporal discounting contrasts with the long-termist view: Why not model non-discounted future utility directly? Noting the motivations we mention in (vii), we agree that models of future value aggregation other than discounting would be generally better suited for long-term objectives. However, this seems to be a difficult task, as such models are neither well developed nor currently used in open-ended AI algorithms (with the obvious exception of a finite time horizon, which we propose to explore in Section~\ref{sec:conclusions}).
    
    We therefore propose a direct interpretation of our results: \emph{Assuming we implement agents that are $\epsilon$-optimizers with discounting, they may become exponentially less aligned over time. This is not the case with perfect optimizers with imperfect knowledge and discounting.}
    \item \emph{Dualistic setting.} We assume a dualistic agent and allow self-modification through special actions. This allows us to formally model one aspect of embedded agency -- at least until there are sufficient theoretical foundations of embedded agency.

    Note that in the embedded (non-dualistic) agent setting, it is not formally clear -- or possibly even definable -- what constitutes a self-modification, since there is no clear conceptual boundary between the agent and the environment, as discussed by \citet{EmbeddedAgency}.
\end{enumerate}

%\subsubsection{Assumption categories and the problem space.} AAAI
\paragraph{Assumption categories and the problem space.}

Each assumption identifies a subspace of research questions we would obtain by varying the relevant choices. These subspaces vary from very technical (e.g. concrete rationality model) to foundational (e.g. finite values and dualistic agent models). Along this axis, the assumptions and choices point to different kinds of prospective problems; we briefly describe three such categories and their prospects. See Section~\ref{sec:conclusions} for concrete proposals of future work.

\medskip
\noindent
\emph{Technical choices:} A concrete model of bounded rationality, unlimited self-modification model, and modification-independence. These are likely important for short and medium time-frames, where even eventually-diverging guarantees are useful.

We believe that many models within some realistic and sufficiently strong model classes would lead to qualitatively equivalent results in long time horizons; e.g. the agent divergence would be asymptotically exponential without external corrigibility, embedded agents in sufficiently complex environments would be able to self-modify arbitrarily over a long time (see discussion above) etc. However, these intuitions call for further verification.

\medskip
\noindent
\emph{Problem components:} No corrigibility mechanisms, unbounded time horizon, time-invariant temporal discounting, focus on the worst-case guarantees. For those, there are interesting alternatives that may yield more optimistic results. In particular, it would be valuable to explore formal models of corrigibility, perform a full probabilistic analysis of agent development, and develop long-term non-discounted finite-time settings.

\medskip
\noindent
\emph{Foundational assumptions:} Dualistic agent model and finite value of the future. Those are a standard in the area, but alternative settings may open up important and fruitful model classes and technical choices that capture currently pre-paradigmatic aspects (e.g. theory of embedded agency and non-dualistic agents).

%\subsection{Paper structure} AAAI
%
%Sections~\ref{sec:preliminaries} and~\ref{sec:def_bounded_rat} formally define the setting and different types of bounded-rational agents.
%In Sections~\ref{sec:policy_mod}, \ref{sec:optimization_bounds}, and \ref{sec:combining} follows the exposition of our results. For the reader's convenience, we have deferred the proofs together with explanation of our techniques to Sections~\ref{sec:policy_mod_techn}, \ref{sec:optimization_bounds_techn}, and \ref{sec:combining_techn}. We highlight several promising research directions in \Cref{sec:conclusions}.

\section{Preliminaries}\label{sec:preliminaries}
	In this section, we explain our model of a self-modifying agent, which is borrowed from \citet{everitt}. We will extend this model to include bounded rationality in \Cref{sec:def_bounded_rat}.

	We use a modified version of the \textit{standard cybernetic model}. In this model, an agent interacts with the environment in discrete time steps. At each time step $t$, the agent performs an \textit{action} $a_t$ from a finite set $\mathcal{A}$ and the environment responds with a \textit{perception} $e_t$ from a finite set $\mathcal{E}$. An \textit{action-perception pair} $\mae_t$ is an action concatenated with a perception. A \textit{history} is a sequence of action-perception pairs $\mae_1\mae_2...\mae_t$. We will often abbreviate such sequences to $\mae_{< t}=\mae_1...\mae_{t-1}$ or $\mae_{n:m}=\mae_n...\mae_m$. A \textit{complete} history $\mae_{1:\infty}$ is a history containing information about all the time steps.
%	
%	\[diagram\] \jakub{za mě není nutnost. Neměl jsem pocit, že by mi ten diagram v Everittovo článku nějak pomohl s pochopením. (Tedy, za předpokladu, že to je ten diagram, co bys tu chtěl)}
%	

    An agent can be described by its policy $\pi$. The policy\footnote{For a set $S$, $S^*$ denotes the set of finite sequences of elements from $S$} $\pi \colon (\mathcal{A \times E})^* \to \mathcal{A}$ is used to determine the agent's next action from the history at time $t$. We consider (bounded-rational) utility maximizers, where the policy is (partially) determined by the instantaneous utility function $u$, belief $\rho$ and discount factor $\gamma$. We sometimes use the notation $\kappa = (u, \rho, \gamma)$, where $\kappa$ is called the agent's \emph{knowledge}. The utility function $\tilde{u} \colon (\mathcal{A \times E})^\infty \to \mathbb{R}$ describes how much the agent prefers the complete history $\mae_{1:\infty}$ compared to other complete histories. We will assume that the total utility is a discounted sum of instantaneous utilities given by the instantaneous utility function $u \colon (\mathcal{A \times E})^* \to [0,1]$. Formally, $ \tilde{u}(\mae_{1:\infty}) = \sum ^{\infty }_{t=1}\gamma^{t-1} u(\mae_{\leq t})$. The discount factor $\gamma$ describes how much the agent prefers immediate reward compared to the same reward at a later time. Smaller $\gamma$ means heavier discounting of the future and stronger preference for immediate reward. Note that the maximum achievable utility is $\frac{1}{1-\gamma}$, which happens when $u(\mae_t)=1$ at each step. Also note that instantaneous utility depends not only on the latest perception but can also depend on all previous perceptions and actions.
	
	In addition to all this, an agent has a belief $\rho \colon (\mathcal{A \times E})^* \times \mathcal{A} \to \Delta \bar{\mathcal{E}}$ where $\Delta \bar{\mathcal{E}}$ is the set of full-support probability distributions over $\mathcal{E}$. This is a function which maps any history ending with an action onto a probability distribution over the next perceptions. Intuitively speaking, the belief describes what the agent expects to see after it performs an action given a certain history. A belief together with a policy induce a measure on the set $(\mathcal{A \times E})^*$ using $P(e_t \mid \mae_{< t}a_t) = \rho(e_t \mid \mae_{< t}a_t)$  and $P(a_t \mid \mae_{< t}) = 1$ if $\pi(\mae_{<t})=a_t$ and 0 otherwise. Intuitively speaking, this probability measure captures probabilities assigned by the agent to possible futures.
	
	Following the reinforcement learning literature, we define the \textit{value function} $V\colon (\mathcal{A \times E})^* \to \mathbb{R}$ as the expected future discounted utility:
	\[ V^\pi(\mae_{<t})=\mathbb{E}[\ \sum_{t'=t}^\infty \gamma^{t'-t}u(\mae_{<t'}) \ ]\]
	The expectation value on the right is calculated with respect to belief $\rho$ and assuming the agent will follow the policy $\pi$. Intuitively, the value function describes how promising the future seems. When the value \(V^\pi(\mae_{<t})\) of a history is high, it means we can expect an agent with policy $\pi$ to collect a lot of utility in the future starting from this history. Note that when calculating V-values, instantaneous utilities are multiplied by \(\gamma^{t'-t}\)  rather than  \(\gamma^{t'}\). This means that V-values can remain high throughout the whole history and are not affected by discounting. We define the Q-value of an action as the expected future discounted utility after taking that action:
	\[ Q^\pi(\mae_{<t} a_t)=\mathbb{E}[u(\mae_{1:t}) + V^{\pi}(\mae_{1:t})] \]
	where the expectation is over the next perception drawn from the belief (note that belief is a probability distribution)

	The Q-value measures how good an action is given that the agent will later follow policy $\pi$. A policy $\pi^*$ is an \textit{optimal policy} when $V^{\pi^*}(\mae_{<t}) = \sup_{\pi} V^{\pi}(\mae_{<t})$ for all histories $\mae_{<t}$ (such a policy always exists, as shown in \cite{Lattimore2014}). 	
	\subsection{Self-modification model}
	In this section, we extend the formalism above to include the possibility of self-modification. Since we are interested in the worst-case scenario, we assume the agent has unlimited self-modification ability. Worst-case results derived for such an agent will also hold for agents with limited ability to self-modify.
	
	\begin{definition}
		A policy self-modification model is defined as a quadruple $(\breve{\mathcal{A}}, \mathcal{E, P}, \iota)$ where $\breve{\mathcal{A}}$ is the set of \textit{world actions}, $\mathcal{E}$ is the set of \textit{perceptions}, $\mathcal{P}$ is a non-empty set of \textit{names} and $\iota$ is a map from $\mathcal{P}$  to the set of all policies $\Pi$. 
	\end{definition}
	At every time step, the agent chooses an action $a_t = (\breve{a}_t, p_{t+1}
	)$ from the set $\mathcal{A} = (\breve{\mathcal{A}} \times \mathcal{P})$. The first part $\breve{a}_t$ describes what the agent ``actually does in the world'' while the second part chooses the policy $\pi_{t+1}=\iota(p_{t+1})$ for the next time step. We will also use the notation  $a_t = (\breve{a}_t, \pi_{t+1}
	)$, keeping in mind that only policies with names may be chosen. This new policy is used in the next step to pick the action $a_{t+1}=\pi_{t+1}(\mae_{1:t})$. Note that only policies with names can be chosen and that $\mathcal{P} = \Pi$ is not a possibility because it entails a contradiction: 
	$ |\Pi| = |(\breve{\mathcal{A}} \times \mathcal{E}\times \Pi)|^{|(\breve{\mathcal{A}} \times \mathcal{E}\times \Pi)^*|} > 2^{|\Pi|}>|\Pi|$. A history can now be written as:
	\[\mae_{1:t} = a_1 e_1 a_2 e_2 ... a_t e_t = \breve{a}_1 \pi_2 e_1 \breve{a}_2 \pi_3 e_2 ... \breve{a}_t \pi_{t+1} e_t \]
	The subscripts for policies are one time step ahead because the policy chosen at time $t$ is used to pick an action at time $t+1$. The subscript denotes at which time step the policy is used. Policy $\pi_t$ is used to choose the action $a_t = (\breve{a_t}, \pi_{t+1})$. No policy modification happens when $a_t = (\breve{a_t}, \pi_t)$.
	%\[diagram \  2\]

	In the previous section, we used these rules to calculate the probability of any finite history: $P(e_t \mid \mae_{< t}a_t) = \rho(e_t \mid \mae_{< t}a_t)$  and $P(a_t \mid \mae_{< t}) = 1$ if $\pi(\mae_{<t})=a_t$ and 0 otherwise. However, the second rule doesn't take into consideration that the agent's policy is changing. Therefore, to account for self modification, we need to modify the second rule into ``$P(a_t\mid \mae_{< t}) = 1$ if $\pi_t(\mae_{<t})=a_t$ and zero otherwise''.  To evaluate the V and Q-functions for self-modifying agents, we need to use probabilities of complete histories calculated in this way.
	
	\begin{definition} \label{def:mod_indep}
		Let $\breve{\mae}_{1:t}$ denote the history $\mae_{1:t}$ with information about self-modification removed so that $\breve{\mae}_{1:t}=\breve{a}_1 e_1\breve{a}_2 e_2 ... \breve{a}_{t} e_t$. A function $ f \colon (\mathcal{A \times E})^* \to (anything)$ is \textit{modification-independent} if $\breve{\mae}_{1:t}=\breve{\mae}_{1:t}^{'}$ implies that $f(\mae_{1:t})=f(\mae_{1:t}^{'})$.
	\end{definition}

\paragraph{Modification-independence assumption:}In the rest of the paper, we will assume that the agent's belief and utility function as well as the correct belief are modification-independent.

\section{Definitions of bounded-rational agents}\label{sec:def_bounded_rat}
We now extend the model from \citet{everitt} by defining two types of bounded-rational agents which we will be using throughout the paper: bounded-optimization agents (described in \Cref{sec:bounded_optimization}) and bounded-knowledge agents (described in \Cref{sec:combining}). Bounded-knowledge agents can be subdivided further into misaligned, ignorant and impatient agents.
\begin{figure}[h] %AAAI
%\begin{figure*}[h]
\includegraphics[width=13.7cm]{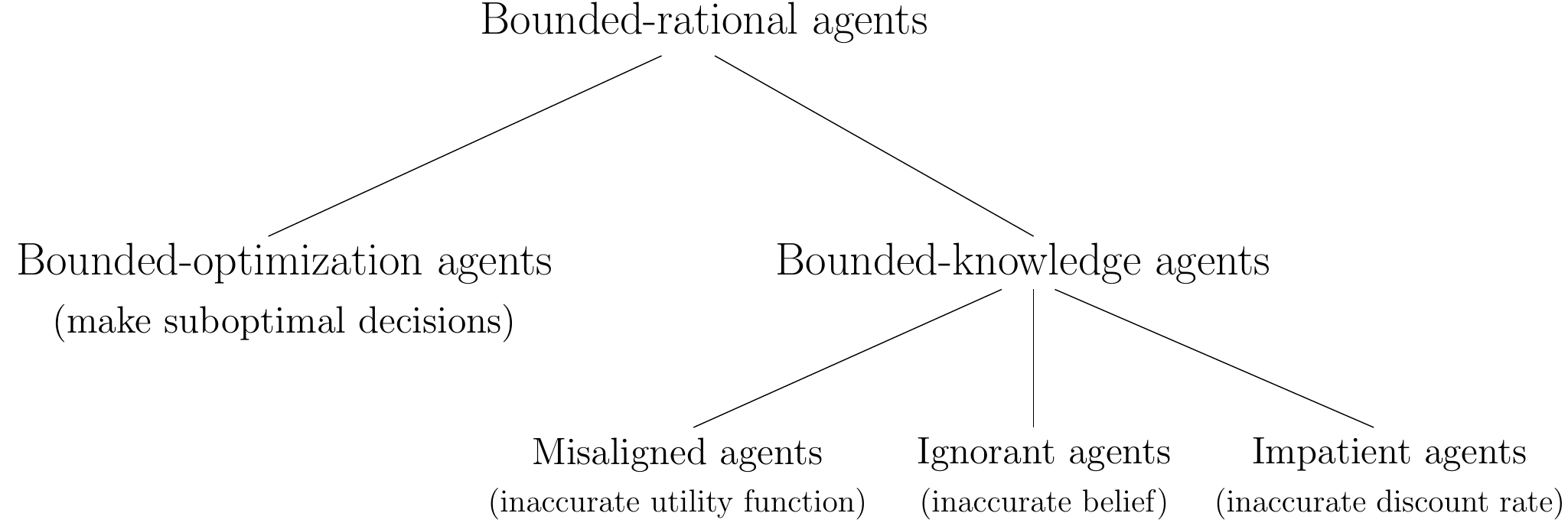}
\centering
\end{figure} %AAAI
%\end{figure*}

\subsection{Bounded-optimization agents} \label{sec:bounded_optimization}
We introduce the notion of \textit{$\epsilon$-optimizers}. Intuitively speaking, the expected future discounted utility gained by an $\epsilon$-optimizer is no more than $\epsilon$ lower than the optimal one in any situation they could get into (that is, for any history).
\begin{definition}
We say that agent $A$ is an $\epsilon$-optimizer for history $\mae_{<t}$ if it holds that 
\begin{equation} \label{eq:def_eps_opt}
Q(\mae_{<t}\pi(\mae_{<t})) \geq \sup_{\pi'} Q(\mae_{<t}\pi'(\mae_{<t})) - \epsilon
\end{equation}
\end{definition}

When the utility function, belief and discount factor is obvious from the context (or unimportant), we also speak of policy being $\epsilon$-optimizing (with respect to the utility function and belief), meaning that the corresponding agent is an $\epsilon$-optimizer.

%\begin{remark}
%We could also define a weaker version of $\epsilon$-optimization, where we would specify a set of histories, in which the inequality \ref{eq:def_eps_opt} would have to hold. We could then say that the \textit{agent is $\epsilon$-optimizing for that set of histories}. This would 
%\end{remark}

\subsection{Bounded-knowledge agents} 
We consider agents with inaccurate knowledge of the correct utility function (\Cref{def:inaccurate_utility}), inaccurate knowledge of the world (\Cref{def:inaccurate_belief_abs,def:inaccurate_belief_rel}), and inaccurate knowledge of the correct discount factor (how much future reward is worth compared to reward in the present).

\subsubsection{Misaligned agents} %AAAI
%\paragraph{Misaligned agents}
We define $\epsilon$-misaligned agents as agents whose utility function $u$ has absolute error $\epsilon$ with respect to the correct utility function $u^*$. 
\begin{definition} \label{def:inaccurate_utility}
We say that the instantaneous utility function $u$ has absolute error $\epsilon$ with respect to the correct utility function $u^*$ if
\[
\sup_{t\in \mathbb{N},\mae_{<t}} |u(\mae_{<t}) - u^*(\mae_{<t})| = \epsilon 
\]
\end{definition}

\subsubsection{Ignorant agents} %AAAI
%\paragraph{Ignorant agents}
We define $\epsilon$-ignorant agents as agents whose belief $\rho$ has error (absolute or relative depending on the context) $\epsilon$ with respect to the correct belief $\rho^*$. 

For belief, we define both relative and absolute error. This is in contrast with utility, for which this does not make sense in our setting. This is because when one speaks of relative utility, one usually compares it to some default action of ``doing nothing" which we do not have.

\begin{definition} \label{def:inaccurate_belief_abs}
We say that a belief $\rho$ has absolute error $\epsilon$ with respect to the correct belief $\rho^*$ if for any $t \in \mathbb{N}$, history $\mae_{<t}$ and action $a$,
\begin{align} 
\|\rho(\mae_{<t}a) - \rho^*(\mae_{<t}a)\|_{TV} \leq \epsilon\tag{$\star$}
\end{align}
where $\|\bullet\|_{TV}$ is the total variational distance. 
\end{definition}
Recall that (on discrete measure spaces where all subsets are measurable) for two distributions (formally two probability measures) $\mu$, $\nu$ on $\mathcal{E}$, the total variational distance is defined as
\[
\|\mu - \nu\|_{TV} = \sup_{E \subseteq \mathcal{E}} |\mu(E) - \nu(E)|
\]
\begin{definition} \label{def:inaccurate_belief_rel}
We say a belief $\rho$ has relative error $\epsilon$ with respect to the correct belief $\rho^*$ if for any $t \in \mathbb{N}$, any history $\mae_{<t}$, action $a$, and percept $e$,
\begin{equation}
\frac{1}{1+\epsilon} \leq \frac{\rho(e|\mae_{<t}a)}{\rho^*(e|\mae_{<t}a)} \leq 1+\epsilon
\end{equation}
\end{definition}

\paragraph{Impatient agents}
We define impatient agents as agents whose discount factor $\gamma$ is smaller than the correct discount factor $\gamma^*$. This means they have a stronger preference for immediate reward compared to the same reward in the future.

% ==============

\section{Exposition of the results}

\subsection{Performance of \texorpdfstring{$\epsilon$}{epsilon}-optimizers can deteriorate} \label{sec:policy_mod}
In their paper, \citet{everitt} show that, for modification-independent belief and utility function, if we start with a perfect expected utility maximizer and at any time replace the current policy by the initial policy, the expected discounted utility stays the same. Therefore, later policies cannot be worse than the initial policy and no deterioration happens. We show that in the case of $\epsilon$-optimizers, such a replacement can never decrease the expected discounted utility by more than $\epsilon$ (Inequality \eqref{eq:1lower_bound_policy_mod}) but can increase it more, meaning that agent's behaviour can deteriorate with time (Inequality \eqref{eq:1upper_bound_policy_mod}). Specifically, it can deteriorate at an exponential rate, until its actions become arbitrarily bad -- that is, until the expected future utility lost is the maximum achievable utility (which is $\frac{1}{1-\gamma}$).
\begin{theorem} \label{thm:policy_modification}
Let $\rho$ and $u$ be modification-independent. Consider a self-modifying agent which is an $\epsilon$-optimizer for the empty history. Then, for every $t \geq 1$, \begin{align}
E_{\mae_{<t}} [Q\left(\mae_{<t} \pi_{t}\left(\mae_{<t}\right)\right)] \geq E_{\mae_{<t}}[Q\left(\mae_{<t} \pi_{1}\left(\mae_{<t}\right)\right)] - \min(\frac{\epsilon}{\gamma^{t-1}}, \frac{1}{1-\gamma}) \label{eq:1upper_bound_policy_mod} %AAAI
%E_{\mae_{<t}} [Q\left(\mae_{<t} \pi_{t}\left(\mae_{<t}\right)\right)] \geq &E_{\mae_{<t}}[Q\left(\mae_{<t} \pi_{1}\left(\mae_{<t}\right)\right)]\\& - \min(\frac{\epsilon}{\gamma^{t-1}}, \frac{1}{1-\gamma}) \label{eq:1upper_bound_policy_mod}
\end{align}
where the expectation is with respect to $\mae_{<t}$ such that the perceptions are distributed according to the belief and the actions are given by $a_{i}=\pi_{i}\left(\mae_{<i}\right)$.

Moreover, for all histories $\mae_{<t}$ given by $a_{i}=\pi_{i}\left(\mae_{<i}\right)$ for which the agent is an $\epsilon$-optimizer it holds that
\begin{gather}
Q\left(\mae_{<t} \pi_{1}\left(\mae_{<t}\right)\right) + \epsilon \geq Q\left(\mae_{<t} \pi_{t}\left(\mae_{<t}\right)\right) \label{eq:1lower_bound_policy_mod}\\
\end{gather}
Equality in Inequality \eqref{eq:1upper_bound_policy_mod} can be achieved up to a factor of at most $\gamma$.
\end{theorem}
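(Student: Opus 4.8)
The plan is to stay inside \citet{everitt}'s framework and let modification-independence carry the argument. Because $u$, $\rho$, and the true belief ignore the self-modification labels, every policy induces a distribution over \emph{world-histories} $\breve{\mae}$ that depends only on the world-actions it plays, and the optimal value $Q^{*}(\mae_{<t})\eqdef\sup_{\pi'}Q(\mae_{<t}\pi'(\mae_{<t}))$ depends only on $\breve{\mae}_{<t}$. The engine is a \emph{splicing} construction: given the agent's own policy trajectory $\pi_{1},\pi_{2},\dots$ and any target policy $\sigma$, there is a policy that internally re-simulates $\pi_{1},\pi_{2},\dots$ from the perceptions seen so far, plays their world-actions for the first $t-1$ steps (thus reproducing the agent's distribution over world-histories and its expected discounted reward over those steps), and then acts as $\sigma$ would from the current history. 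Theorem~16 of \citet{everitt} is the $\epsilon=0$ specialization.

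To prove \eqref{eq:1upper_bound_policy_mod} I would run four steps. First, decompose the agent's own value at the empty history: $Q(\mae_{<1}\pi_{1}(\mae_{<1}))=R+\gamma^{t-1}\,E_{\mae_{<t}}\!\left[Q(\mae_{<t}\pi_{t}(\mae_{<t}))\right]$, where $R$ is the expected discounted reward over the first $t-1$ steps of the trajectory. Second, splice with $\sigma$ an optimal policy: the resulting policy has value $R+\gamma^{t-1}\,E_{\mae_{<t}}\!\left[Q^{*}(\mae_{<t})\right]$ and therefore lower-bounds $Q^{*}(\mae_{<1})$. Third, invoke the $\epsilon$-optimizer hypothesis at the empty history, $Q(\mae_{<1}\pi_{1}(\mae_{<1}))\ge Q^{*}(\mae_{<1})-\epsilon$. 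Fourth, combine the three: the $R$'s cancel, and dividing by $\gamma^{t-1}$ yields $E[Q(\mae_{<t}\pi_{t}(\mae_{<t}))]\ge E[Q^{*}(\mae_{<t})]-\epsilon/\gamma^{t-1}\ge E[Q(\mae_{<t}\pi_{1}(\mae_{<t}))]-\epsilon/\gamma^{t-1}$, the last step because $Q^{*}\ge Q(\mae_{<t}\pi_{1}(\mae_{<t}))$. The $\tfrac{1}{1-\gamma}$ branch of the minimum is immediate since every $Q$-value lies in $[0,\tfrac{1}{1-\gamma}]$.

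Inequality \eqref{eq:1lower_bound_policy_mod} is a pointwise statement at trajectory histories $\mae_{<t}$ where the agent is an $\epsilon$-optimizer, and I would argue it locally: $Q(\mae_{<t}\pi_{t}(\mae_{<t}))\le Q^{*}(\mae_{<t})$ trivially, so it suffices that the reset policy be $\epsilon$-optimal at $\mae_{<t}$, i.e. $Q^{*}(\mae_{<t})\le Q(\mae_{<t}\pi_{1}(\mae_{<t}))+\epsilon$, which is just the $\epsilon$-optimizer condition read at that history. The care needed here is stating precisely which policy's $\epsilon$-optimality is assumed at the downstream history $\mae_{<t}$ and checking it holds; I would fix the exact form of the hypothesis before writing this out. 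For the tightness claim I would give an explicit deterministic-up-to-full-support environment: rewards are $0$ for the first $t-1$ steps; at the empty history the $\epsilon$-optimizer is allowed to commit to a modification chain $\pi_{1}\!\to\!\pi_{2}\!\to\!\cdots\!\to\!\pi_{t}$ ending in a ``do-nothing'' $\pi_{t}$ of continuation value $0$, whereas $\pi_{1}$, evaluated off that chain at $\mae_{<t}$, plays actions of continuation value $v\eqdef\min\!\left(\epsilon/\gamma^{t-1},\tfrac{1}{1-\gamma}\right)$ (fractional instantaneous rewards suffice, and the environment caps the value reachable from step $t$ so that the bad chain remains $\epsilon$-optimal at the empty history). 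The step-$1$ cost of the bad chain is then $\gamma^{t-1}v\le\epsilon$, so the agent is an $\epsilon$-optimizer there, while the realized deterioration is essentially $v$ — matching \eqref{eq:1upper_bound_policy_mod} up to the factor $\gamma$ asserted in the statement, the slack stemming from the full-support requirement on beliefs and the discreteness of the time steps (when the suboptimal modification is committed versus when its cost is discounted).

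The step I expect to be the main obstacle is making the splicing rigorous: that the spliced policy exists as a \emph{named} policy (recall $\mathcal{P}\neq\Pi$), and that re-simulating the agent's trajectory from the world-perceptions reproduces the agent's world-history distribution and its expected first-$(t-1)$-step reward \emph{exactly} — a short coupling argument over $(\mathcal{A}\times\mathcal{E})^{*}$ using modification-independence of $u$ and $\rho$. A secondary difficulty is pinning down the precise hypothesis under which \eqref{eq:1lower_bound_policy_mod} holds.
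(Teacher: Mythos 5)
Your proposal is correct and follows essentially the same route as the paper: the upper bound comes from discounting the expected time-$t$ loss back to time $1$ and invoking $\epsilon$-optimality at the empty history (your splicing argument just makes rigorous what the paper asserts in one sentence), the pointwise bound \eqref{eq:1lower_bound_policy_mod} is read off the definition exactly as you do, and tightness is shown by an explicit self-modification chain that degrades to a zero-value policy. The subtleties you flag (which policy's $\epsilon$-optimality is assumed at $\mae_{<t}$, and whether the spliced comparison policy is available among the named policies) are real but are glossed over by the paper as well, so they do not separate your argument from the published one.
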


The expectation in inequality \eqref{eq:1upper_bound_policy_mod} is necessary as can be demonstrated by the following example. Consider an environment in which the first perception is $\alpha$ with probability $\epsilon (1-\gamma)$ and $\beta$ otherwise. Regardless of the first perception, the utility in the future is always $1$ if the action following this perception is $a$ and $0$ if $b$. An $\epsilon$-optimizing agent which performs action $a$ may choose to self-modify to an agent which performs action $a$ after perception $\alpha$ and $b$ after perception $\beta$, thus losing $\frac{\gamma}{1-\gamma}$ utility in the case of perception $\beta$, regardless of how small $\epsilon$ is.

\smallskip \noindent
Setting $\epsilon=0$ allows us to easily recover Theorem 16 from \cite{everitt}, showing that self-modifications do not impact expected discounted utility gained by perfectly rational agents. This proof is also considerably simpler than the one in the original paper. For the proof, see \Cref{cor:recover_everitt}. %AAAI

If we only care about future discounted utility, this deterioration in performance doesn't need to concern us because it only happens at future times when utility is heavily discounted. From the definition of an $\epsilon$-optimizer, the maximum utility lost is indeed only $\epsilon$. On the other hand, if we care about long-term performance of the agent and have only introduced the discount factor for instrumental reasons (as would likely be the case), the possibility of self-modification becomes a serious problem. The discount factor might be introduced because optimizing the long-term future might be computationally intractable.

\subsection{Bounded-knowledge agents are \texorpdfstring{$\epsilon$}{epsilon}-optimizers} \label{sec:optimization_bounds}
In this section, we discuss perfect utility maximizers with bounded knowledge and show performance guarantees for such agents. In \Cref{sec:combining}, we combine these results, show how to relax the assumption of perfect optimization and, most importantly, show how the performance of a bounded-rational agent differs between the cases with and without self-modification.

In \Cref{lem:optimization_bound}, we show that if the agent's estimate of the expected discounted utility is at most $\epsilon$ away from the true value, the agent will be a $2\epsilon$-optimizer. In \crefrange{sec:utility}{sec:gammas}, we show bounds on how inaccurate the agent's estimate of expected discounted utility can be, thus proving bounds on optimization. In \Cref{sec:gammas}, we proceed differently: we formulate the worst case as a solution of an optimization problem which we then solve analytically.

\begin{lemma} \label{lem:optimization_bound}
Let A be a (possibly self-modifying) perfect expected utility maximizer with knowledge $\kappa = (u,\rho, \gamma)$. Let $\kappa^* = (u^*,\rho^*, \gamma^*)$ be the correct knowledge. Assume that
\begin{equation} \label{eq:bound_assumption}
|V^\pi_\kappa(\mae_{<t}) - V^\pi_{\kappa^*}(\mae_{<t})| \leq \epsilon
\end{equation}
for all policies $\pi$ and histories $\mae_{<t}$. Then, A is a $2\epsilon$-optimizer with respect to $\kappa^*$.
\end{lemma}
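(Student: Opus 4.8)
The plan is to show that the agent's policy $\pi$, which is optimal with respect to its own knowledge $\kappa$, cannot be more than $2\epsilon$ worse than any competing policy when evaluated under the correct knowledge $\kappa^*$. The basic idea is a standard ``near-optimality transfers under uniform approximation'' argument: if two value functions are everywhere within $\epsilon$, then a policy that maximizes one of them is a $2\epsilon$-maximizer of the other. I would first unpack the definition of $\epsilon$-optimizer: the claim $A$ is a $2\epsilon$-optimizer with respect to $\kappa^*$ means $Q_{\kappa^*}(\mae_{<t}\,\pi(\mae_{<t})) \geq \sup_{\pi'} Q_{\kappa^*}(\mae_{<t}\,\pi'(\mae_{<t})) - 2\epsilon$ for all histories $\mae_{<t}$. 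Since $Q$ differs from $V$ only by the bounded instantaneous term $u(\mae_{1:t})$ (which is identical whether we follow $\pi$ or $\pi'$ after the action, and — if $u=u^*$ — even coincides across $\kappa$ and $\kappa^*$; more carefully, the Q-to-V passage is governed by the same next-perception expectation), it suffices to control the discrepancy $|Q_\kappa - Q_{\kappa^*}|$, which I would derive from \eqref{eq:bound_assumption}. Actually the cleanest route is to work directly with $Q$: note $Q_\kappa(\mae_{<t}a) = \mathbb{E}_\rho[u(\mae_{1:t}) + V^\pi_\kappa(\mae_{1:t})]$ and bound the difference from $Q_{\kappa^*}(\mae_{<t}a)$ using the hypothesis on $V$; I would remark that one may fold the instantaneous-utility term and the expectation-over-$\rho$ subtleties into the stated $\epsilon$ (or observe the $V$-bound already implicitly accounts for them since $V$ includes the first term when re-indexed), so that $|Q_\kappa(\mae_{<t}a) - Q_{\kappa^*}(\mae_{<t}a)| \leq \epsilon$ for every history-action pair.

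With that in hand, the core inequality chain is three steps. Fix a history $\mae_{<t}$ and write $a = \pi(\mae_{<t})$ for the agent's action and let $\pi'$ be an arbitrary competing policy with $a' = \pi'(\mae_{<t})$. Then
\begin{align*}
Q_{\kappa^*}(\mae_{<t}\,a) &\geq Q_\kappa(\mae_{<t}\,a) - \epsilon \\
&\geq Q_\kappa(\mae_{<t}\,a') - \epsilon \\
&\geq Q_{\kappa^*}(\mae_{<t}\,a') - 2\epsilon,
\end{align*}
where the first and third inequalities are the uniform approximation bound and the middle one is optimality of $\pi$ under $\kappa$ (the agent is a perfect expected-utility maximizer with knowledge $\kappa$, so $Q_\kappa(\mae_{<t}\,\pi(\mae_{<t})) = \sup_{\pi'} Q_\kappa(\mae_{<t}\,\pi'(\mae_{<t}))$). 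Taking the supremum over $\pi'$ on the right gives exactly the $2\epsilon$-optimizer condition. The self-modification clause requires a small amount of care: the competing ``policies'' in the definition of $\epsilon$-optimizer range over full policies (including their self-modification components), but since both the approximation hypothesis and the perfect-optimization property are quantified over \emph{all} policies $\pi$, the argument goes through verbatim regardless of whether $\pi'$ prescribes self-modification.

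The main obstacle, and the place I expect the paper to be careful, is the passage from the hypothesis on $V$ to the needed bound on $Q$, because $Q$ involves the extra instantaneous-utility term and an expectation over the next percept drawn from the agent's (possibly incorrect) belief $\rho$ rather than $\rho^*$. One has to argue that either (a) the $\epsilon$ in \eqref{eq:bound_assumption} is exactly what is needed because $Q_\kappa(\mae_{<t}a)$ equals $\mathbb{E}_\rho[u(\mae_{1:t})] + \mathbb{E}_\rho[V^\pi_\kappa(\mae_{1:t})]$ and the corresponding decomposition of $Q_{\kappa^*}$ uses $\mathbb{E}_{\rho^*}$, so the difference is not obviously bounded by $\epsilon$ termwise — or (b) reinterpret: since $V^\pi_\kappa(\mae_{<t})$ already is $\mathbb{E}_\rho[\sum_{t'\geq t}\gamma^{t'-t}u(\mae_{<t'})]$ and likewise $V^\pi_\kappa(\mae_{<t}) = \mathbb{E}_\rho[u(\mae_{\le t}) + \gamma V^\pi_\kappa(\mae_{\le t})]$ when we push one step, the quantity $Q_\kappa(\mae_{<t}a)$ for the fixed action $a$ is nothing but a conditional version of $V$ and is therefore within $\epsilon$ of $Q_{\kappa^*}(\mae_{<t}a)$ by applying the hypothesis to the one-step-extended histories and the policy that plays $a$ first. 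I would make this reduction explicit rather than hand-wave it, and I would double-check that no extra factor of $\gamma$ or a doubling sneaks in — if it does, the conclusion would become a $2\epsilon$- (or slightly worse) optimizer with the same shape, which is still what is stated.
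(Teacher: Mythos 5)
Your proposal is correct and is essentially the paper's argument: a three-inequality sandwich applying the uniform $\epsilon$-approximation hypothesis twice and the optimality of the agent's policy under its own knowledge $\kappa$ once, yielding the $2\epsilon$ loss. The only difference is that the paper runs the chain directly on $V$-values, namely $V^{\pi_{\kappa^*}}_{\kappa^*}(\mae_{<t}) -\epsilon \leq V^{\pi_{\kappa^*}}_{\kappa}(\mae_{<t}) \leq V^{\pi_{\kappa}}_{\kappa}(\mae_{<t}) \leq V^{\pi_{\kappa}}_{\kappa^*}(\mae_{<t}) + \epsilon$, which dissolves the $Q$-to-$V$ passage you worry about at length, since the $Q$-value of a policy's own action followed by that policy is just its $V$-value and the hypothesis therefore applies verbatim.
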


\subsubsection{\texorpdfstring{$\epsilon$}{epsilon}-misaligned agents are \texorpdfstring{$\epsilon'$}{epsilon'}-optimizers}\label{sec:utility}
We now consider agents with an inaccurate utility function and derive bounds on $\epsilon'$ such that the misaligned agent is an $\epsilon'$-optimizer. 

\begin{theorem} \label{thm:inaccurate_utility}
Let $A$ be a perfect utility maximizer with utility function $u$ and error $\epsilon$ with respect to the correct utility function $u^*$. Then it is a $\frac{2\epsilon}{1-\gamma}$-optimizer with respect to $u^*$. Moreover, this bound is tight.
\end{theorem}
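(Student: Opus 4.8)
The plan is to combine Lemma~\ref{lem:optimization_bound} with a direct bound on how far apart the value functions $V^\pi_\kappa$ and $V^\pi_{\kappa^*}$ can be when the only discrepancy between $\kappa$ and $\kappa^*$ is in the instantaneous utility function. So first I would fix an arbitrary policy $\pi$ and history $\mae_{<t}$, write out $V^\pi_u(\mae_{<t}) - V^\pi_{u^*}(\mae_{<t})$ as a single expectation (same belief $\rho$ and discount $\gamma$ on both sides, so the underlying probability measure on futures is identical) of $\sum_{t'\ge t}\gamma^{t'-t}\bigl(u(\mae_{<t'}) - u^*(\mae_{<t'})\bigr)$. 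Applying the triangle inequality inside the expectation and the pointwise bound $|u(\mae_{<t'}) - u^*(\mae_{<t'})| \le \epsilon$ from Definition~\ref{def:inaccurate_utility}, this is at most $\sum_{t'\ge t}\gamma^{t'-t}\epsilon = \frac{\epsilon}{1-\gamma}$. Hence the hypothesis \eqref{eq:bound_assumption} of Lemma~\ref{lem:optimization_bound} holds with $\epsilon$ replaced by $\frac{\epsilon}{1-\gamma}$, and the lemma immediately gives that $A$ is a $\frac{2\epsilon}{1-\gamma}$-optimizer with respect to $u^*$. This handles the upper bound; the whole argument is short because the belief and discount are unchanged, which is exactly why a crude term-by-term bound already loses nothing asymptotically.

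For tightness I would construct an explicit environment where the factor $\frac{2\epsilon}{1-\gamma}$ is essentially attained. The idea is to make the agent's utility and the true utility disagree by exactly $\epsilon$ in a way that flips the optimal action: set up two available action-trajectories, one on which $u^*$ gives instantaneous reward but $u$ gives slightly less (or none), and one on which $u$ gives the larger reward according to the agent's skewed utility. Concretely, arrange a state with two actions $a$ and $b$: following $a$ forever yields true instantaneous utility roughly $c$ per step and agent-perceived utility roughly $c-\epsilon$ per step, while following $b$ forever yields true utility roughly $c-\epsilon$ per step but agent-perceived utility roughly $c$ per step (all values kept in $[0,1]$, each distorted by at most $\epsilon$ pointwise as required). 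The perfect maximizer with utility $u$ prefers $b$, but the true value of $b$ is lower than that of $a$ by about $\frac{2\epsilon}{1-\gamma}$ (a gap of $2\epsilon$ per step, summed with discounting). One must check the pointwise error is at most $\epsilon$ everywhere — it is, by construction, $\epsilon$ on exactly these trajectories and $0$ elsewhere — and that the agent genuinely selects $b$.

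The main obstacle is the tightness construction rather than the upper bound: one has to be careful that the distortion $u - u^*$ is simultaneously (i) pointwise bounded by $\epsilon$, (ii) large enough in aggregate to shift the $u$-optimal choice away from the $u^*$-optimal one, and (iii) consistent with $u$ and $u^*$ mapping into $[0,1]$ (so the base reward level $c$ must be chosen in, say, $[\epsilon, 1-\epsilon]$, and one should note the claim is about the worst case over environments, so exhibiting one such environment suffices). A mild subtlety is that "$\epsilon$-optimizer" is defined via $Q$-values while Lemma~\ref{lem:optimization_bound} is phrased via $V$-values, but $Q$ is just an expectation of an immediate-utility term plus $V$, so the same $\frac{\epsilon}{1-\gamma}$ bound transfers and no extra work is needed there. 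Everything else is routine geometric-series bookkeeping.
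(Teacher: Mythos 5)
Your upper-bound argument is exactly the paper's: you bound $|V^\pi_u(\mae_{<t}) - V^\pi_{u^*}(\mae_{<t})|$ by $\frac{\epsilon}{1-\gamma}$ via a term-by-term application of the pointwise error bound and a geometric series (this is Lemma~\ref{lem:utility_bound} in the paper), and then feed this into Lemma~\ref{lem:optimization_bound} to get the $\frac{2\epsilon}{1-\gamma}$-optimizer conclusion. That part is correct and complete.

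The tightness construction, however, has a factor-of-$2$ slip as written. You set true utilities $c$ (action $a$) versus $c-\epsilon$ (action $b$), so the true per-step gap between the two trajectories is only $\epsilon$, and your environment certifies a loss of only $\frac{\epsilon}{1-\gamma}$, not the claimed $\frac{2\epsilon}{1-\gamma}$; the parenthetical ``a gap of $2\epsilon$ per step'' contradicts your own numbers. To actually attain $\frac{2\epsilon}{1-\gamma}$ you must spend the full $\pm\epsilon$ budget in opposite directions: take true utilities $u^*$ equal to $1$ on the good trajectory and $1-2\epsilon$ on the bad one, and perceived utility $u \equiv 1-\epsilon$ on both (this is what the paper does). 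The price is that the $u$-maximizer is then merely indifferent between the two actions, so the construction relies on the worst-case tie-breaking of a perfect maximizer (``the agent may always take action $0$''); with a strict preference for the bad action, the pointwise error constraint forces the true gap strictly below $2\epsilon$ per step, and tightness is achieved only in the supremum. This is a small, easily repaired defect, but as stated your environment does not witness the bound.
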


In the random-error case when for any $\mae_{<t}$, we randomly choose $u(\mae_{<t})$ from the set $\{\max(0,u^*(\mae_{<t})-\epsilon), \min(1,u^*(\mae_{<t})+\epsilon)\}$, we give a simple lower bound that is only a factor $4$ away from the upper bound.
Consider an environment with only one perception $1$ and actions $\{0,1\}$ and $u^*(1|\mae_{<t}0) = 1-2\epsilon$ and $u^*(1|\mae_{<t}1) = 1$. With probability $1/4$, 
it holds that $u(1|\mae_{<t}1) = u(1|\mae_{<t}0)$, in which case the agent may take the suboptimal action $0$, thus losing $2\epsilon$ in instantaneous utility. At every step, it therefore loses $\epsilon/2$ instantaneous expected utility. In total, it then loses in expectation $\frac{\epsilon}{2(1-\gamma)}$. We have thus proved the following:

\begin{theorem}
Let $A$ be a perfect utility maximizer whose utility function $u$ is such that for any $e_t,\mae_{<t}a_t$, the value $u(e_t|\mae_{<t}a_t)$ is chosen independently and uniformly from the set $\{\max(0,u^*(e_t|\mae_{<t}a_t)-\epsilon), \min(1,u^*(e_t|\mae_{<t}a_t)+\epsilon)\}$. Then, the amount of utility lost is in expectation
\[
\frac{\epsilon}{2(1-\gamma)}
\]
\end{theorem}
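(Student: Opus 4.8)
The plan is to work with the explicit environment just described: a single perception $1$, world-actions $\{0,1\}$, and $u^*(1\mid\mae_{<t}0)=1-2\epsilon$, $u^*(1\mid\mae_{<t}1)=1$ for every history $\mae_{<t}$. Assuming $\epsilon\le 1/3$ so that the clipping in the definition is inactive, the agent's perturbed utility has $u(1\mid\mae_{<t}0)$ uniform on $\{1-3\epsilon,\,1-\epsilon\}$ and $u(1\mid\mae_{<t}1)$ uniform on $\{1-\epsilon,\,1\}$, with all draws independent. Since the single perception makes the belief exact and trivial, a perfect utility maximizer $A$ simply follows a policy maximizing $\sum_{t\ge 1}\gamma^{t-1}u(\mae_{\le t})$; moreover, even if $A$ is allowed to self-modify this changes nothing, as a perfect maximizer with modification-independent knowledge never gains from self-modification \cite{everitt}. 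So we may reason about $A$ as an ordinary optimal policy.

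First I would put the loss into closed form. Under $u^*$ the optimal behaviour always plays $1$ and attains $\tfrac{1}{1-\gamma}$, whereas any realized trajectory of $A$ has $u^*$-value $\tfrac{1}{1-\gamma}-2\epsilon\sum_{t\ge 1}\gamma^{t-1}\mathbf{1}[a^A_t=0]$, since playing $0$ rather than $1$ at step $t$ costs exactly $1-(1-2\epsilon)=2\epsilon$ of (undiscounted) instantaneous utility. Taking expectations, the expected value lost equals $2\epsilon\sum_{t\ge 1}\gamma^{t-1}\Pr[a^A_t=0]$, so it suffices to establish $\Pr[a^A_t=0]=\tfrac14$ for every $t$ and then sum the geometric series, obtaining $2\epsilon\cdot\tfrac14\cdot\tfrac1{1-\gamma}=\tfrac{\epsilon}{2(1-\gamma)}$.

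For the per-step probability, observe that along its own trajectory $A$ at a history $\mae_{<t}$ compares $u(1\mid\mae_{<t}0)+\gamma V^{*}(\mae_{<t}\,0\,1)$ against $u(1\mid\mae_{<t}1)+\gamma V^{*}(\mae_{<t}\,1\,1)$, where the two continuation values are identically distributed (the two subtrees obey the same self-similar law and are built from disjoint, independent families of utility draws) and are independent of the two immediate perturbations. With probability $\tfrac12\cdot\tfrac12=\tfrac14$ one has $u(1\mid\mae_{<t}0)=u(1\mid\mae_{<t}1)=1-\epsilon$; on this event action $0$ is among the $Q$-maximizing actions exactly when the continuation after $0$ is at least as good as the one after $1$, and we resolve the choice in favour of $0$. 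On the complementary event the positive gap between the two immediate utilities pushes $A$ towards $1$. This yields $\Pr[a^A_t=0]=\tfrac14$ at every step, uniformly over the random history actually reached.

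The main obstacle is making the previous paragraph rigorous. Because $A$'s choice at step $t$ depends on the realized continuation values and not merely on the immediate perturbed utilities, neither the tie event nor its complement pins down the action by itself; one must use the distributional symmetry of the two subtrees, together with a fixed convention for choosing among optimal policies, to recover the frequency exactly $\tfrac14$, and one must also check that this per-step statement composes correctly under the conditioning on $A$'s own past choices. The remaining ingredients — the closed-form loss, the reduction away from self-modification, and the geometric summation — are routine.
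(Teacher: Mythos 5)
Your construction and computation are exactly the paper's: a single perception, world actions $\{0,1\}$, $u^*$ equal to $1$ after action $1$ and $1-2\epsilon$ after action $0$, a probability-$\frac14$ tie of the two immediate perturbed utilities at $1-\epsilon$, a $2\epsilon$ instantaneous $u^*$-loss whenever action $0$ is taken, and a geometric sum. The paper's own ``proof'' is precisely the informal paragraph you reproduce (it only says the agent ``may take the suboptimal action $0$'' on the tie event), so the obstacle you flag at the end is not an artifact of your write-up --- the paper leaves it unaddressed too. However, your proposed resolution does not actually yield $\Pr[a^A_t=0]=\frac14$. On the tie event the agent plays $0$ exactly when $V_0\ge V_1$ for the two i.i.d.\ continuation values of the disjoint subtrees, which by exchangeability happens with probability about $\frac12$, not $1$; and on the three non-tie events the agent can still play $0$ whenever $\gamma(V_0-V_1)$ exceeds the immediate gap ($\epsilon$, $2\epsilon$, or $3\epsilon$), which has positive probability. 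So the per-step probability of the suboptimal action is a quantity depending on the law of $V_0-V_1$ that is bounded below by roughly $\frac18$ but is not pinned to $\frac14$, and what the construction actually certifies is an expected loss of order $\frac{\epsilon}{1-\gamma}$ (at least about $\frac{\epsilon}{4(1-\gamma)}$), not the exact constant $\frac12$ in the statement. This is the same level of rigor as the paper --- the theorem should really be read as a constant-factor lower bound --- and your instinct to worry about the conditioning on the agent's own optimal trajectory and about the continuation values is exactly the right one; the clean fix is to use the symmetry of the two subtrees to get $\Pr[a^A_t=0]\ge\frac14\cdot\frac12$ and accept the weaker constant.
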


\subsubsection{\texorpdfstring{$\epsilon$}{epsilon}-ignorant agents are \texorpdfstring{$\epsilon'$}{epsilon'}-optimizers} \label{sec:belief}
In this section, we discuss agents with inaccurate belief. \Cref{thm:inaccurate_belief} gives bounds on the utility lost as a result of the agent having an inaccurate belief. We give an upper bound in terms of the (weaker) absolute error and lower bounds in terms of both absolute and relative error, showing that the upper bound is tight up to factors of $2$ and $4$ for absolute and relative error.

\begin{theorem} \label{thm:inaccurate_belief}
Let $A$ be a perfect expected utility maximizer whose belief $\rho$ has absolute error $\epsilon$ with respect to the correct belief $\rho^*$. Then it is a $(\frac{2}{1-\gamma} - \frac{2}{1- \gamma(1-\epsilon)})$-optimizer with respect to $\rho^*$ and this bound is tight up to a factor of $2$. Moreover, if $\epsilon$ is the relative error, this bound is tight up to a factor of 4.
\end{theorem}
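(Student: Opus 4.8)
The plan is to prove the upper bound by controlling how much switching the belief from $\rho^*$ to $\rho$ can distort the value of an \emph{arbitrary fixed} policy, and then feeding this into \Cref{lem:optimization_bound}. Concretely, I would fix a policy $\pi$ and a history $\mae_{<t}$ and compare $V^\pi_\rho(\mae_{<t})$ with $V^\pi_{\rho^*}(\mae_{<t})$ via a step-by-step coupling of the two futures generated from $\mae_{<t}$ under $\pi$: at each step the next action is $\pi$ applied to the (so far common) history, and the next perceptions $e\sim\rho(\,\cdot\mid\mae_{<t'}a_{t'})$ and $e^*\sim\rho^*(\,\cdot\mid\mae_{<t'}a_{t'})$ are coupled so that $\Pr[e\ne e^*]=\|\rho(\mae_{<t'}a_{t'})-\rho^*(\mae_{<t'}a_{t'})\|_{TV}\le\epsilon$, which is possible by the coupling characterisation of total variation distance. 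As long as all perceptions so far agree the two histories are identical and (the policy being deterministic) the processes stay in lockstep, so the relative-index-$k$ summand $\gamma^{k}u(\mae_{<t+k})$ of the value differs between the two runs only if a mismatch occurred among the first $k$ perceptions, which has probability at most $1-(1-\epsilon)^{k}$; since $u\in[0,1]$ its contribution to $\bigl|V^\pi_\rho(\mae_{<t})-V^\pi_{\rho^*}(\mae_{<t})\bigr|$ is at most $\gamma^{k}\bigl(1-(1-\epsilon)^{k}\bigr)$. Summing the geometric series,
\[
\bigl|V^\pi_\rho(\mae_{<t})-V^\pi_{\rho^*}(\mae_{<t})\bigr|\le\sum_{k\ge0}\gamma^{k}\bigl(1-(1-\epsilon)^{k}\bigr)=\frac{1}{1-\gamma}-\frac{1}{1-\gamma(1-\epsilon)},
\]
and applying \Cref{lem:optimization_bound} with $\kappa=(u,\rho,\gamma)$ and $\kappa^*=(u,\rho^*,\gamma)$ yields that $A$ is a $\bigl(\tfrac{2}{1-\gamma}-\tfrac{2}{1-\gamma(1-\epsilon)}\bigr)$-optimizer with respect to $\rho^*$.

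For the absolute-error lower bound I would use a ``survival'' environment with perceptions $\{\textsf{alive},\textsf{dead}\}$, where $\textsf{dead}$ is absorbing and the instantaneous utility is $1$ exactly while $\textsf{alive}$; there are two actions, a safe $a$ and a risky $b$, under $\rho^*$ playing $a$ stays $\textsf{alive}$ with probability $1-\delta$ and playing $b$ with probability $1-\delta-\epsilon$, whereas under $\rho$ both do so with probability $1-\delta$ (so the belief error is exactly $\epsilon$), and an arbitrarily small utility bonus after $b$ makes a perfect maximiser always choose $b$. Then the agent's true value from the empty history is $\frac{1}{1-\gamma(1-\delta-\epsilon)}$ while the optimum (always $a$) achieves $\frac{1}{1-\gamma(1-\delta)}$; letting $\delta\to0$ the loss tends to $\frac{1}{1-\gamma}-\frac{1}{1-\gamma(1-\epsilon)}$, exactly half of the upper bound, establishing tightness up to a factor of $2$.

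For the relative-error lower bound I would reuse the same environment but re-derive the largest per-step death-probability gap the agent can be fooled by: if $a$ has true death probability $p$ then the agent may believe it to be as large as $p(1+\epsilon)$, while $b$ with true death probability up to $p(1+\epsilon)^2$ can still be believed to equal $p(1+\epsilon)$, so a perfect maximiser will play $b$; optimising the free parameter $p$ (subject to the full-support/normalisation constraints on $\rho$) in the resulting loss $\frac{1}{1-\gamma(1-p)}-\frac{1}{1-\gamma(1-p(1+\epsilon)^2)}$ gives a loss that is a constant fraction of the upper bound, the extra factor over the absolute case reflecting the $\approx 2\times$ gap between a relative error of size $\epsilon$ and the total-variation distance it can induce (together with the factor $2$ already lost in \Cref{lem:optimization_bound}).

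The main obstacle is not the upper bound, which is a routine coupling estimate, but pinning down the constants in the two lower-bound constructions while respecting that beliefs must be full-support distributions: one must check that the perturbed beliefs are genuine probability distributions carrying exactly the claimed error, that the tie-breaking really forces the perfect maximiser onto the risky action, and --- in the relative-error case --- that the optimising choice of $p$ simultaneously satisfies $\tfrac{1}{1+\epsilon}\le\rho/\rho^*\le1+\epsilon$ for both perceptions of both actions while still realising a $\Theta(1)$ fraction of the bound.
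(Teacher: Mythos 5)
Your upper bound follows the paper's route exactly: the step-by-step coupling giving a total-variation error of $1-(1-\epsilon)^k$ over a $k$-step horizon, the geometric summation to $\frac{1}{1-\gamma}-\frac{1}{1-\gamma(1-\epsilon)}$, and the final factor of $2$ from \Cref{lem:optimization_bound} are precisely Lemmas~\ref{lem:belief_v_bound} and~\ref{lem:epsilon_bound_belief}. Your absolute-error lower bound is a cosmetic variant of the paper's survival construction and does deliver the factor $2$ (in the limit $\delta\to0$; the paper attains it outright by letting the safe action survive with probability exactly $1$).

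The genuine gap is in the relative-error tightness claim, exactly at the point you flagged as needing checking. In your construction the two actions have true death probabilities $p$ and $p(1+\epsilon)^2$, both believed equal to $p(1+\epsilon)$, so the realized loss is $L(p)=\frac{1}{1-\gamma(1-p)}-\frac{1}{1-\gamma(1-p(1+\epsilon)^2)}$. Writing $x=1-\gamma$, this is maximized at $p=\frac{x}{\gamma(1+\epsilon)}$ with value $L_{\max}=\frac{\epsilon}{x(2+\epsilon)}$, whereas the upper bound equals $\frac{2\gamma\epsilon}{x(x+\gamma\epsilon)}$; the ratio is $\frac{2\gamma(2+\epsilon)}{x+\gamma\epsilon}$, which is unbounded when $1-\gamma$ and $\epsilon$ are simultaneously small (e.g.\ about $2000$ at $\gamma=0.999$, $\epsilon=0.001$). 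No choice of $p$ fixes this, because the obstruction is structural: \Cref{def:inaccurate_belief_rel} only lets the believed death probability exceed the true one by $\epsilon q$ where $q$ is the true death probability, so a per-step death-probability gap of order $\epsilon$ forces $q=\Theta(1)$, which caps the attainable value at $O(1)$ rather than $\Theta(\frac{1}{1-\gamma})$. For what it is worth, the paper's own construction for this part obtains the constant only by giving the safe action true survival probability $1$ while the believed survival probability is $\frac{1}{1+\epsilon}$, i.e.\ by assigning positive believed probability to a percept of true probability $0$, which itself violates \Cref{def:inaccurate_belief_rel}; your instinct to respect the constraint is sound, but it exposes that the factor-$4$ claim requires a genuinely different construction (if one exists at all in the regime $1-\gamma\ll\epsilon\ll1$), so this part of your proposal does not go through.
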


So far, we have considered the worst-case scenario. In the next theorem, we show that in the case of both absolute and relative error, the upper bound is tight up to a constant factor even in the case when the error at each timestep is randomly chosen from the set $\{-\epsilon, \epsilon\}$ (that is, $\rho(e_t|\mae_{<t}a_t)$ is chosen uniformly from the set $\{\max(0,\rho^*(e_t|\mae_{<t}a_t)-\epsilon), \min(1,\rho^*(e_t|\mae_{<t}a_t)+\epsilon)\}$ in the case of absolute error and $\{\frac{\rho^*(e_t|\mae_{<t}a_t)}{1+\epsilon}, \min(1,(1+\epsilon)\rho^*(e_t|\mae_{<t}a_t))\}$ in the case of relative error), independently of other timesteps.

\begin{theorem} \label{thm:avg_case_belief}
Let $A$ be a perfect expected utility maximizer whose belief $\rho$ is such that for any $e_t,\mae_{<t}a_t$, the value $\rho(e_t|\mae_{<t}a_t)$ is independently for any argument chosen uniformly from the set $\{\max(0,\rho^*(e_t|\mae_{<t}a_t)-\epsilon), \min(1,\rho^*(e_t|\mae_{<t}a_t)+\epsilon)\}$ in the case of absolute error and $\{\frac{\rho^*(e_t|\mae_{<t}a_t)}{1+\epsilon}, \min(1,(1+\epsilon)\rho^*(e_t|\mae_{<t}a_t))\}$ in the case of relative error.

Then, in expectation, the amount of expected discounted utility lost is respectively
\begin{gather}
\frac{1}{1-\gamma} - \frac{1}{1- \gamma(1-\epsilon/8)}\\
\frac{1}{1-\gamma} - \frac{1}{1- \gamma(1-\epsilon/16)}
\end{gather}

and this is equal to the upper bound for non-random error up to a factor of $16$ and $32$, respectively.
\end{theorem}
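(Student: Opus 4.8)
The plan is to prove both bounds as lower-bound constructions, re-using a variant of the environment behind the lower-bound half of \Cref{thm:inaccurate_belief}: I would exhibit one family of environments on which the randomly-perturbed agent provably leaks the stated amount of value in expectation over the perturbation. Concretely, fix an absorbing ``trap'': one distinguished perception $\times$, with instantaneous utility $0$ on every history in which $\times$ has ever occurred and $1$ on every other history. At each time step the agent has a \emph{safe} action $b$ with correct trap probability a tiny $\eta>0$ and a \emph{risky} action $a$ with correct trap probability $c\epsilon$ for a small constant $c$. Under $\rho^*$ the unique optimal policy always plays $b$; letting $\eta\to 0$ at the end (the loss is monotone in $\eta$, so this only helps and sidesteps the full-support requirement on $\rho^*$), its value is $\frac{1}{1-\gamma}$. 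Because a perfect expected-utility maximiser's value under its own knowledge is unchanged by self-modification (the $\epsilon=0$ case of \Cref{thm:policy_modification}), the agent simply follows the $\rho$-optimal non-self-modifying policy, so it is enough to analyse that policy.

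At a live state the current reward and the common continuation value cancel from the two $Q$-values, so the agent prefers $a$ to $b$ precisely when its perturbed trap probability for $a$ is $\le$ that for $b$ (ties broken adversarially, exactly as in the random-error misaligned-agent result stated above). Since the belief perturbations at distinct history--action pairs are independent and each live state sits at its own history, the events ``the agent plays the risky action at step $t$'' are independent across $t$. Enumerating the four equally likely perturbation patterns at a step and choosing $c$ so that the perturbed risky probability can just reach $0$, a fixed fraction of the patterns (I expect $\frac12$ for absolute error) drive the agent onto $a$; hence at every step independently the agent's policy falls into the trap with the same probability $\delta$, equal to that fraction times the true trap probability $c\epsilon$ of the risky action, and $c$ is tuned so that $\delta=\epsilon/8$. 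The agent then collects reward $1$ until it first hits the trap, a time that is geometric with success probability $\delta$, so its value is $\sum_{t\ge 0}\gamma^t(1-\delta)^t=\frac{1}{1-\gamma(1-\delta)}$ and its expected loss is $\frac{1}{1-\gamma}-\frac{1}{1-\gamma(1-\epsilon/8)}$, as claimed. The relative-error case is handled by the analogous but more delicate construction (the relative perturbation is multiplicatively weaker near the small probabilities involved, which is exactly what turns $\epsilon/8$ into $\epsilon/16$).

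The final ``tight up to a factor $16$ / $32$'' claim is then pure algebra: using $\frac{1}{1-\gamma}-\frac{1}{1-\gamma(1-\delta)}=\frac{\gamma\delta}{(1-\gamma)(1-\gamma(1-\delta))}$, the ratio of the \Cref{thm:inaccurate_belief} upper bound $\frac{2}{1-\gamma}-\frac{2}{1-\gamma(1-\epsilon)}$ to our lower bound with $\delta=\epsilon/8$ equals $16\cdot\frac{1-\gamma(1-\epsilon/8)}{1-\gamma(1-\epsilon)}\le 16$ because $\delta\mapsto 1-\gamma(1-\delta)$ is increasing; the relative case is identical with $\delta=\epsilon/16$, the extra factor $2$ coming from \Cref{thm:inaccurate_belief} being only $4$-tight (rather than $2$-tight) for relative error.

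I expect the main obstacle to be the bookkeeping that pins down the exact constants $1/8$ and $1/16$: one must take the trap probability of the risky action as large as the perturbation can ever bridge while keeping $b$ strictly $\rho^*$-optimal once continuation values are accounted for, keep every perturbed number a legal (full-support, and for relative error also within a factor $1\pm\epsilon$) probability, and match the clean value $\frac{1}{1-\gamma}$ via the $\eta\to 0$ limit. The remaining steps --- the independence across time steps, the geometric sum, and the closing algebraic comparison --- are routine.
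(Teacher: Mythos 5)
Your overall architecture matches the paper's: a lower-bound construction in which the agent takes a slightly suboptimal action with per-step probability $\Theta(\epsilon)$, an absorbing ``all value is lost'' event, a geometric sum giving $\frac{1}{1-\gamma(1-\delta)}$, and a closing ratio computation (your algebra for the factors $16$ and $32$ is correct and is essentially what the paper does). But the core probabilistic step has a genuine gap. You assert that ``the current reward and the common continuation value cancel from the two $Q$-values, so the agent prefers $a$ to $b$ precisely when its perturbed trap probability for $a$ is $\le$ that for $b$.'' This is false in the model at hand: the belief is a function of the full history including actions, so the live subtree reached via $a$ and the live subtree reached via $b$ are rooted at distinct histories and carry \emph{independent} perturbations. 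Their $\rho$-optimal continuation values are therefore two distinct (i.i.d.) random variables, not a common quantity that cancels, and the agent's comparison involves them. Consequently your claim that the events ``the agent plays the risky action at step $t$'' are independent across $t$ also fails: the decision at step $t$ depends on the perturbed optimal values of entire subtrees, and once you condition on the path the agent actually followed (i.e., on those subtrees having won the comparisons), the perturbations along that path are no longer distributed as fresh unconditioned coin flips.

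This conditioning issue is precisely what the paper's proof spends most of its effort on. It bounds the per-step probability of the bad action by combining three events (the error at the current node being in the positive direction, and the relevant subtree optimum comparison going the wrong way, each controlled by symmetry), and then invokes a dedicated stochastic-dominance result, \Cref{lem:probabilistic_bound}, to show that conditioning on the realized path being $\rho$-optimal can only \emph{increase} the probability that each error along it points in the positive direction --- which is the favorable direction for the lower bound --- so the per-step probability of the suboptimal action remains at least $1/8$ (resp.\ $1/16$). Your proposal contains no substitute for this lemma, and without it the step from ``each perturbation is positive with probability $1/2$ a priori'' to ``each perturbation along the chosen path is positive with probability at least $1/2$'' is unjustified. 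To repair the argument you would need either to prove such a monotone-conditioning statement yourself, or to redesign the environment so that the agent's comparison genuinely reduces to comparing two perturbed one-step probabilities with no random continuation terms, which the present trap construction does not achieve.
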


\subsubsection{Impatient agents are \texorpdfstring{$\epsilon$}{epsilon}-optimizers} \label{sec:gammas}
In this section, we discuss the case when an agent has an incorrect discount factor and give a bound on the performance of this agent with respect to the correct discount factor. We only consider the case when the agent discounts faster than the correct discount rate -- we deem this to be the interesting case as, generally speaking, while optimizing in the long-term might be desirable, it is difficult to achieve, so the agent is likely to optimize in shorter term than desired. Bounds for the other case can be derived by the same method. To simplify the bounds, we define $k = \lceil - \frac{1}{\lg \gamma}\rceil$.

\begin{theorem}\label{thm:imprecise_discount}
Let $\pi_{\gamma}$ and $\pi_{\gamma^*}$ be perfect expected utility maximizers with respect to discount factors $\gamma$ and $\gamma^*$ respectively for some $\gamma \leq \gamma^*$, either with or without the ability to self-modify. Let $u$,$\rho$ be their utility function and belief. Then 
\[ %AAAI
\left|V^{\pi_{\gamma^*}}\left(\mae_{<t}\right)-V^{\pi_{\gamma}}\left(\mae_{<t}\right)\right| \leq \frac{{\gamma^*}^{k} + {\gamma^*}^{k-1} - 1}{1-\gamma^*} - {\gamma^*}^{k-1}\frac{\gamma^{k} + \gamma^{k-1} - 1}{\gamma^{k-1}(1-\gamma)}
\]
%\begin{align}
%\left|V^{\pi_{\gamma^*}}\left(\mae_{<t}\right)-V^{\pi_{\gamma}}\left(\mae_{<t}\right)\right| \leq &\frac{{\gamma^*}^{k} + {\gamma^*}^{k-1} - 1}{1-\gamma^*}\\& - {\gamma^*}^{k-1}\frac{\gamma^{k} + \gamma^{k-1} - 1}{\gamma^{k-1}(1-\gamma)}
%\end{align}
\end{theorem}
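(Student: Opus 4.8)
The plan is to reduce the theorem to a single infinite-dimensional linear program over pairs of utility streams, and then solve that program exactly. I read the value functions in the statement as being computed with the correct discount $\gamma^*$ (this is forced: the displayed bound matches this reading and not, e.g., the one where $V^{\pi_\gamma}$ uses $\gamma$). Since $\pi_{\gamma^*}$ is a perfect $\gamma^*$-maximizer — and in the self-modifying case the $\epsilon=0$ specialization of \Cref{thm:policy_modification} (see \Cref{cor:recover_everitt}) guarantees self-modification costs it no value — we have $V^{\pi_{\gamma^*}}(\mae_{<t}) \ge V^{\pi_\gamma}(\mae_{<t})$, so the absolute value equals $V^{\pi_{\gamma^*}}(\mae_{<t}) - V^{\pi_\gamma}(\mae_{<t})$, and it suffices to bound this above.

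To set up the program, fix $\mae_{<t}$ and let $x_j\in[0,1]$ be the expected instantaneous utility at step $t+j$ under $\pi_\gamma$ (expectation over $\rho$), and $y_j\in[0,1]$ the same under $\pi_{\gamma^*}$. By definition of the value function, $V^{\pi_\gamma}_{\gamma'}(\mae_{<t}) = \sum_{j\ge0}(\gamma')^j x_j$ and $V^{\pi_{\gamma^*}}_{\gamma'}(\mae_{<t}) = \sum_{j\ge0}(\gamma')^j y_j$ for $\gamma'\in\{\gamma,\gamma^*\}$. Because $\pi_\gamma$ attains the optimal $\gamma$-value, $\sum_{j\ge0}\gamma^j x_j = V^{\pi_\gamma}_\gamma(\mae_{<t}) \ge V^{\pi_{\gamma^*}}_\gamma(\mae_{<t}) = \sum_{j\ge0}\gamma^j y_j$. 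Setting $d_j := y_j - x_j$, and noting that the only constraint this places on $d$ is $d_j\in[-1,1]$ (any such $d$ is realizable by some $x,y\in[0,1]^{\mathbb N}$), the quantity to bound is the value of: maximize $\sum_{j\ge0}(\gamma^*)^j d_j$ subject to $\sum_{j\ge0}\gamma^j d_j \le 0$ and $d\in[-1,1]^{\mathbb N}$. All series converge absolutely (terms dominated by $(\gamma^*)^j$), so this is well posed, and this single program dominates every environment.

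Now I would solve the program. The objective coefficient $(\gamma^*)^j$ and the constraint coefficient $\gamma^j$ have non-decreasing ratio $(\gamma^*/\gamma)^j \ge 1$, so a short exchange argument applies: if $d_{j_1}>-1$ and $d_{j_2}<1$ for some $j_1<j_2$, replacing $(d_{j_1},d_{j_2})$ by $(d_{j_1}-\eta\gamma^{j_2},\, d_{j_2}+\eta\gamma^{j_1})$ for small $\eta>0$ leaves the constraint value unchanged and changes the objective by $\eta\gamma^{j_1}\gamma^{j_2}\big((\gamma^*/\gamma)^{j_2}-(\gamma^*/\gamma)^{j_1}\big)\ge0$. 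Hence the optimum is "bang-bang": there is a threshold $j^*$ with $d_j=-1$ for $j<j^*$, $d_j=+1$ for $j>j^*$, and $d_{j^*}=c\in[-1,1]$; and the constraint must be tight (otherwise raise $c$ or lower $j^*$), which gives $c=\frac{1-\gamma^{j^*}-\gamma^{j^*+1}}{\gamma^{j^*}(1-\gamma)}$. One checks that $c\in[-1,1]$ is equivalent to $\gamma^{j^*}\ge\tfrac12\ge\gamma^{j^*+1}$, which by the definition $k=\lceil-\tfrac1{\lg\gamma}\rceil$ holds precisely for $j^*=k-1$; this $d$ is feasible, so the supremum is attained. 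Substituting $j^*=k-1$ and this $c$ into $\sum_{j\ge0}(\gamma^*)^j d_j = \frac{(\gamma^*)^k+(\gamma^*)^{k-1}-1}{1-\gamma^*}+c\,(\gamma^*)^{k-1}$ yields exactly $\frac{(\gamma^*)^k+(\gamma^*)^{k-1}-1}{1-\gamma^*}-(\gamma^*)^{k-1}\frac{\gamma^k+\gamma^{k-1}-1}{\gamma^{k-1}(1-\gamma)}$, which is the claimed bound.

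I expect the main obstacle to be the bookkeeping around the ceiling in the definition of $k$: one must verify that $k$ really pins down $\gamma^{k-1}\ge\tfrac12\ge\gamma^k$ and hence that $j^*=k-1$ is the (essentially unique) admissible threshold, handling the degenerate cases ($-\tfrac1{\lg\gamma}$ integral, or $c$ landing on $\pm1$) where a neighbouring threshold also describes the optimum but gives the same value. A minor secondary point is phrasing the exchange/tightness argument in the infinite-dimensional setting, but absolute convergence makes this routine (alternatively one can certify optimality directly via the multiplier $\lambda=(\gamma^*/\gamma)^{j^*}$). Finally, the extremal $d$ above is realized by a concrete environment branching at $\mae_{<t}$ (one branch deterministically rewards the first $k-1$ steps plus a fractional $k$-th step, the other rewards everything from step $k-1$ on), which furnishes the matching lower bound behind the tightness claim in the overview; the theorem as stated requires only the upper bound direction.
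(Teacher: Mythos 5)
Your proposal is correct and follows essentially the same route as the paper: both reduce the problem to the infinite linear program of maximizing $\sum_j (\gamma^*)^j d_j$ over $d_j\in[-1,1]$ subject to $\sum_j \gamma^j d_j\le 0$ (the constraint coming from $\gamma$-optimality of $\pi_\gamma$), establish the bang-bang structure of the optimum by an exchange argument, show the constraint is tight, and solve for the threshold $k=\lceil -1/\lg\gamma\rceil$. Your write-up is in fact slightly more careful than the paper's on the sign of the difference and on verifying that $c\in[-1,1]$ pins down the threshold.
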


\smallskip \noindent
For $\gamma \rightarrow 1$, it holds that $\lceil - \frac{1}{\lg \gamma} \rceil \sim - \frac{1}{\lg \gamma}$. This enables us to simplify the previous result to get a good approximation for when $\gamma$ is close to $1$:
\[
\frac{{\gamma^*}^{k} + {\gamma^*}^{k-1} - 1}{1-\gamma^*} - {\gamma^*}^{k-1}\frac{\gamma^{k} + \gamma^{k-1} - 1}{\gamma^{k-1}(1-\gamma)} \approx \frac{2 {\gamma^*}^\frac{1}{\lg \gamma}-1}{1-\gamma^*}
\]

\subsection{Combining the results} \label{sec:combining}
In this section we combine the results from \cref{sec:policy_mod,sec:optimization_bounds} and present a bound on the utility lost by an agent which is misaligned, ignorant, impatient and has bounded optimization, all at the same time. It is an interesting feature of this bound that the worst-case performance guarantee can in some cases be improved by adjusting its discount rate.

Recall that the functions $f_\bullet$ in the following theorem have been defined in \Cref{sec:summary}.
\begin{theorem} \label{thm:combining}
Let $A$ be an $\epsilon_o$-optimizer for the empty history with either (1) the ability to self-modify and modification-independent utility function and belief, or (2) without the ability to self-modify and with a possibly modification-dependent utility function and belief. Let $\gamma$ be the agent's discount rate, $\epsilon_u$ the error in its utility function wrt. the correct utility function $u^*$ and $\epsilon_\rho$ its absolute error in belief with respect to the correct belief $\rho^*$. Then at timestep $t$:
\begin{enumerate}[(1)]
\item If we let $\epsilon'$ be the smallest possible number such that $A$ at time $t$ is an $\epsilon'$-optimizer, then $E_{\mae_{<t}}[\epsilon'] \leq f_\text{opt}(\epsilon_o, \gamma) + f_\text{util}(\epsilon_u, \gamma) + f_\text{bel}(\epsilon_\rho, \gamma) + f_\text{disc}(\gamma, \gamma^*)$ where the expectation is over histories where perceptions are distributed according to $\rho^*$ and actions are given by the agent's policy. Moreover, if $\epsilon_o = 0$, then $\epsilon' \leq f_\text{opt}(\epsilon_o, \gamma) + f_\text{util}(\epsilon_u, \gamma) + f_\text{bel}(\epsilon_\rho, \gamma) + f_\text{disc}(\gamma, \gamma^*)$ almost certainly.
\item $A$ will be an $\epsilon'$-optimizer, with respect to the correct discount rate $\gamma^*$, where $\epsilon' \leq \epsilon_o + f_\text{util}(\epsilon_u, \gamma) + f_\text{bel}(\epsilon_\rho, \gamma) + f_\text{disc}(\gamma, \gamma^*)$ 
\end{enumerate}

Moreover, when $\gamma \geq 1/2$, there exists an agent which achieves equality up to a factor of at most $8$ and up to a factor of $16$ if $\epsilon_\rho$ is the relative error.
\end{theorem}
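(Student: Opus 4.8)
The plan is to prove the upper bound by peeling off the four error sources one at a time, reducing to a comparison with a perfect optimizer that has the correct knowledge $\kappa^{*}=(u^{*},\rho^{*},\gamma^{*})$, and to prove tightness by assembling the four individual worst-case instances into a single environment.

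\textbf{The conversion step.} The core ingredient is a ``transitivity'' statement: \emph{any $\delta$-optimizer with respect to $\kappa=(u,\rho,\gamma)$ is a $\bigl(\delta+f_\text{util}(\epsilon_u,\gamma)+f_\text{bel}(\epsilon_\rho,\gamma)+f_\text{disc}(\gamma,\gamma^{*})\bigr)$-optimizer with respect to $\kappa^{*}$.} I would prove this by chaining through the intermediate knowledge states $(u,\rho,\gamma)\to(u^{*},\rho,\gamma)\to(u^{*},\rho^{*},\gamma)\to(u^{*},\rho^{*},\gamma^{*})$. For the utility and belief links I reuse the uniform $V$-perturbation bounds underlying Theorem~\ref{thm:inaccurate_utility} and Theorem~\ref{thm:inaccurate_belief} (of size $\tfrac{\epsilon_u}{1-\gamma}$ and half of $f_\text{bel}$ respectively), together with the observation behind Lemma~\ref{lem:optimization_bound} — which only uses the $V$-perturbation bound, not the perfect-maximizer hypothesis — that a $\delta$-optimizer under a uniform $V$-perturbation of size $\eta$ is a $(\delta+2\eta)$-optimizer for the perturbed knowledge. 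For the discount link I invoke Theorem~\ref{thm:imprecise_discount}, extended from perfect maximizers to $\epsilon$-optimizers. Keeping the agent's $\gamma$ (rather than $\gamma^{*}$) in the first two terms is legitimate because $\gamma\le\gamma^{*}$ makes $\tfrac1{1-\gamma}\le\tfrac1{1-\gamma^{*}}$, so the $\gamma$-version is the smaller still-valid bound.

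\textbf{Assembling the two cases.} For part~(2) the policy is, by hypothesis, literally an $\epsilon_o$-optimizer with respect to its own knowledge $\kappa$, so the conversion step immediately gives the claimed bound with $\epsilon_o$ in place of $f_\text{opt}$. For part~(1) I first bound the damage done by self-modification: since the agent is an $\epsilon_o$-optimizer for the empty history with respect to $\kappa$ and $u,\rho$ are modification-independent, inequality~\eqref{eq:1upper_bound_policy_mod} of Theorem~\ref{thm:policy_modification} shows that in expectation $\pi_t$ is an $f_\text{opt}(\epsilon_o,\gamma)$-optimizer with respect to $\kappa$ (up to the overall constant, which also absorbs the $\epsilon_o$-suboptimality of $\pi_1$ itself); applying the conversion step — which is an almost-sure inequality in the history — and taking expectations yields $E_{\mae_{<t}}[\epsilon']\le f_\text{opt}+f_\text{util}+f_\text{bel}+f_\text{disc}$. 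When $\epsilon_o=0$, combining~\eqref{eq:1upper_bound_policy_mod} with the pointwise inequality~\eqref{eq:1lower_bound_policy_mod} (with $\epsilon=0$) forces $Q_\kappa(\mae_{<t}\pi_t(\mae_{<t}))=Q_\kappa(\mae_{<t}\pi_1(\mae_{<t}))$ almost surely, so $\pi_t$ is a perfect $\kappa$-maximizer almost surely and the conversion step gives $\epsilon'\le f_\text{util}+f_\text{bel}+f_\text{disc}$ almost certainly.

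\textbf{Tightness and the main obstacle.} For the matching lower bound I would construct one environment whose opening moves split the dynamics into four independent gadgets acting on disjoint coordinates of the history: a replay of the worst-case self-modifying $\epsilon$-optimizer of Theorem~\ref{thm:policy_modification} (here $\gamma\ge\tfrac12$ keeps its loss a constant fraction of $\tfrac1{1-\gamma}$), and replays of the tight instances of Theorem~\ref{thm:inaccurate_utility}, Theorem~\ref{thm:inaccurate_belief}, and Theorem~\ref{thm:imprecise_discount}; disjointness makes the four losses add, so the total is within a constant ($8$, or $16$ for relative belief error) of $f_\text{opt}+f_\text{util}+f_\text{bel}+f_\text{disc}$. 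I expect the delicate part to be the discount link of the conversion step: Theorem~\ref{thm:imprecise_discount} bounds the gap between the two \emph{optimal} policies for $\gamma$ and $\gamma^{*}$, which is much smaller than the naive uniform $V$-perturbation $\tfrac1{1-\gamma^{*}}-\tfrac1{1-\gamma}$, so one must be careful about the order in which the four knowledge changes are applied, about which discount factor each intermediate bound is evaluated at, and about extending the discount bound to near-optimal rather than optimal policies; likewise, verifying that the four tightness gadgets genuinely do not interfere is where the $\gamma\ge\tfrac12$ hypothesis and the explicit constants enter.
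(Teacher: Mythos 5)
Your upper bound is essentially the paper's argument: the paper also telescopes through a chain of intermediate agents $A_0,\dots,A_5$ (perfect knowledge $\to$ wrong $\gamma$ $\to$ wrong $u$ $\to$ wrong $\rho$ $\to$ $\epsilon_o$-optimization), bounds each consecutive $V$-gap by \Cref{thm:imprecise_discount,thm:inaccurate_utility,thm:inaccurate_belief} and by $\epsilon_{o,t}$ from \Cref{thm:policy_modification}, and then takes expectations for case (1). One small remark on the "delicate" discount link you flag: the paper avoids having to extend \Cref{thm:imprecise_discount} to near-optimal policies by ordering the chain so that the discount change is the \emph{first} perturbation (comparing two perfect optimizers) and the $\epsilon_o$-suboptimality is peeled off \emph{last}; your ordering, which changes $\gamma$ last starting from a $\delta$-optimizer, forces exactly the extension you worry about, so you should reorder. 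Where you genuinely diverge is the lower bound. You propose building a single environment containing all four worst-case gadgets on disjoint coordinates and arguing their losses add without interference — which, if it worked, would give tightness up to a factor of about $2$, but requires a non-trivial non-interference argument (and it is not obvious the gadgets can coexist without diluting each other, e.g.\ the discount gadget and the self-modification gadget both manipulate the temporal profile of utility). The paper instead uses a much cheaper selection argument: at least one of the four terms $f_\text{opt},f_\text{util},f_\text{bel},f_\text{disc}$ is at least a quarter of their sum, and the single tight instance for that term (each tight up to a factor of $2$, using $\gamma\ge 1/2$ for the $f_\text{opt}$ gadget) already realizes $\epsilon'$ up to $4\cdot 2=8$ (resp.\ $4\cdot 4=16$ for relative belief error). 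This sacrifices a factor of $4$ in the constant but eliminates the composition step entirely; your route is viable but you would still owe the interference analysis, which is the hard part you have only named, not carried out.
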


%\begin{remark}
%We have from \Cref{thm:policy_modification} that, for modification-independent belief and policy, if we start with an $\epsilon_o$-optimizer, then at step $t$ we have a $\min(\frac{\epsilon_o}{\gamma^{t-1}}, \frac{1}{1-\gamma})$-optimizer. Applying \Cref{thm:combining} to the agent at time $t$, we get that it is an $\epsilon'$-optimizer for $\epsilon' \leq \min(\frac{\epsilon_o}{\gamma^{t-1}}, \frac{1}{1-\gamma}) + f_\text{util}(\epsilon_u, \gamma) + f_\text{bel}(\epsilon_\rho, \gamma) + f_\text{disc}(\gamma, \gamma^*)$.
%In particular, when $\epsilon_o = 0$, the agent's performance will not deteriorate with time, recovering Theorem 16 from \cite{everitt}.
%\end{remark}

\section{Future work} \label{sec:conclusions}
We propose several directions for future research. In general, it would be interesting to explore the central problem of self-modification safety under different agent and environment models and with different assumptions.

\smallskip
\noindent
\textbf{Bounded rationality models.} We analyzed a model of bounded-rationality with a strict upper bound on the size of errors (of several kinds). While this shows that even agents guaranteed to have small errors may self-modify in detrimental ways, the analysis would be significantly different for fully stochastic bounded rationality models (e.g. negligible expected errors with non-negligible variance). One such model of interest is Information-Theoretic Bounded Rationality of \citet{ortega2015informationtheoretic}.

\smallskip
\noindent
\textbf{Awareness of own bounded-rationality.} Whatever underlying decision procedure the agent uses somehow \emph{implicitly} takes its $\epsilon$-optimality into account -- in particular since the assumed $\epsilon$-optimality depends on the behavior of future agent versions. In our formulation, we do not assume the agent to have \emph{explicit} knowledge of its bounded rationality model and $\epsilon$, which would at least intuitively seem useful to know. 

Note, however, that in our framing such explicit knowledge would not be necessarily useful, as any deliberation about it is subject to the same error within $\epsilon$-optimality. Therefore it may be interesting to explore bounded rationality models where the information about own bounded rationality could be explicitly reasoned about (with more precision than e.g. modelling the trajectory of the full environment). Would the agent then be more reluctant to self-modify?

\smallskip
\noindent
\textbf{Time horizons and discounting.} Avoiding temporal discounting would likely yield results with stronger implications (Section~\ref{sec:assumptions}). We propose analysing the finite-time undiscounted case, as well as exploring other means of future value aggregation (finite or infinite, as explored by \citet{Bostrom2011}). 

\smallskip
\noindent
\textbf{Model of self-modification.} As noted above, embedded agents with a strong influence on the environment may self-modify by exploiting the environment. However, the extent of this self-modification, and the strength and stability of mechanisms against self-modification (e.g. via modification-dependent utility function) require further research.

\smallskip
\noindent
\textbf{Probabilistic analysis.} Build stochastic models of agent rationality and self-modification, and perform full probabilistic analysis. This may e.g. inform us about required safety margins. In particular, approaches based on statistical physics and information theory seems to be promising here and have already proven fruitful in analyzing existing optimization problems and algorithms.

\section{Derivation of the results}

\subsection{Performance of \texorpdfstring{$\epsilon$}{epsilon}-optimizers can deteriorate}
\label{sec:policy_mod_techn}
In this section, we show bounds on how much the expected discounted utility can be changed by self-modifications of $\epsilon$-optimizers.
\begin{repeatthm}{thm:policy_modification}
Let $\rho$ and $u$ be modification-independent. Consider a self-modifying agent which is $\epsilon$-optimizer for the empty history. Then, for every $t \geq 1$, \begin{gather}
E_{\mae_{<t}} [Q\left(\mae_{<t} \pi_{t}\left(\mae_{<t}\right)\right)] \geq E_{\mae_{<t}}[Q\left(\mae_{<t} \pi_{1}\left(\mae_{<t}\right)\right)] - \min(\frac{\epsilon}{\gamma^{t-1}}, \frac{1}{1-\gamma}) \label{eq:1upper_bound_policy_mod}
\end{gather}
where the expectation is with respect to $\mae_{<t}$ such that the perceptions are distributed according to the belief and the actions are given by $a_{i}=\pi_{i}\left(\mae_{<i}\right)$.

Moreover, for all histories $\mae_{<t}$ given by $a_{i}=\pi_{i}\left(\mae_{<i}\right)$ for which the agent is an $\epsilon$-optimizer it holds that
\begin{gather}
Q\left(\mae_{<t} \pi_{1}\left(\mae_{<t}\right)\right) \geq Q\left(\mae_{<t} \pi_{t}\left(\mae_{<t}\right)\right) - \epsilon \label{eq:1lower_bound_policy_mod}\\
\end{gather}
Equality in Inequality \eqref{eq:1upper_bound_policy_mod} can be achieved up to a factor of at most $\gamma$.
\end{repeatthm}
\begin{proof}[Proof:]
\item
\vspace{-15px}
\paragraph{Inequality~\eqref{eq:1upper_bound_policy_mod}} If the future discounted utility lost, in expectation with respect to $\mae_{<t}$, at time $t$ by $\pi_t$ is $\delta$, then the discounted utility lost at time $1$ is at least $\gamma^{t-1} \delta$. Because the agent is an $\epsilon$-optimizer for the empty history, it holds that  $\gamma^{t-1} \delta \leq \epsilon$. We, therefore, have $\delta \leq \frac{\epsilon}{\gamma^{t-1}}$. To see the other branch of the $\min$, note that no more than the maximum achiavable utility $\frac{1}{1-\gamma}$ can be lost at any time step.

\paragraph{Inequality \eqref{eq:1lower_bound_policy_mod}} follows directly from the definition of an $\epsilon$-optimizer -- the $Q$-value at the histories for which the agent is an $\epsilon$-optimizer has to be at most $\epsilon$ lower than the maximum achiavable one and thus also than $Q\left(\mae_{<t} \pi_{1}\left(\mae_{<t}\right)\right)$. This holds because, thanks to modification-independence, the maximum achievable utility does not depend on the specific policy name and its changes.

\paragraph{Tightness of bound~\eqref{eq:1upper_bound_policy_mod}} can be shown by constructing such a utility function, belief and policy. Let us consider the trivial single-perception deterministic belief, set of two actions $\{0,1\}$, utility function which assigns utility $0$ to histories ending with $0$ and utility $1$ to histories ending with $1$. Now consider the family of policies $\{\pi_i\}_{i=1}^\infty$ such that $\pi_i$ always selects $\pi_{i+1}$ as the next policy and performs action $1$ for $i \leq \frac{\ln(1-\gamma)\epsilon}{\ln \gamma} + 1 \eqdef \mathfrak{b}$
and action $0$ for $i > \mathfrak{b}$. It is easy to verify that the discounted utility lost is equal to at most $\gamma^{\mathfrak{b}-1}\frac{1}{1-\gamma} = \epsilon$, meaning that the policy $\pi_1$ is $\epsilon$-optimizing.

It is also easy to check that the discounted utility lost at time $t$ is
\begin{gather}
\min(\gamma^{\ceil{\mathfrak{b}}-t},1)\frac{1}{1-\gamma} \geq \min(\gamma^{\mathfrak{b}+1-t},1)\frac{1}{1-\gamma} = \gamma^{\mathfrak{b}-1} \frac{1}{1-\gamma} \min(\gamma^{2-t},\gamma^{1-\mathfrak{b}}) \geq \\
\geq \epsilon \gamma \min(\gamma^{1-t},\gamma^{-\mathfrak{b}}) = \epsilon \gamma \min(\gamma^{1-t},\frac{\gamma^{-1}}{(1-\gamma)\epsilon}) = \gamma \min(\frac{\epsilon}{\gamma^{t-1}}, \frac{\gamma^{-1}}{1-\gamma}) \geq \gamma \min(\frac{\epsilon}{\gamma^{t-1}}, \frac{1}{1-\gamma})
\end{gather}
which proves the theorem. Moreover, we see that if $\frac{1}{1-\gamma}$ is the smaller branch of $\min()$ in the bound, the bound is tight.
\end{proof}

From this, we now recover Theorem 16 from \cite{everitt}.

\begin{corollary} \label{cor:recover_everitt}
Let $\rho$ and $u$ be modification-independent. Consider a self-modifying $\epsilon$-optimizer. Then, for every $t \geq 1$, perception sequence $e_{<t}$, and the action sequence $a_{<t}$ given by $a_{i}=\pi_{i}\left(\mae_{<i}\right)$, it then holds that
\begin{gather}
Q\left(\mae_{<t} \pi_{t}\left(\mae_{<t}\right)\right)] = Q\left(\mae_{<t} \pi_{1}\left(\mae_{<t}\right)\right)
\end{gather}
almost surely.
\end{corollary}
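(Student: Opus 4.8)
The plan is to specialize Theorem~\ref{thm:policy_modification} to the case $\epsilon=0$ and combine its two inequalities. First I would observe that a self-modifying $0$-optimizer is, by definition, an agent whose policy achieves the supremum of $Q(\mae_{<t}\pi'(\mae_{<t}))$ over $\pi'$ at every history reachable under its own play; in particular it is a $0$-optimizer for the empty history, so the hypothesis of Theorem~\ref{thm:policy_modification} is met with $\epsilon=0$.

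Next I would apply Inequality~\eqref{eq:1upper_bound_policy_mod} with $\epsilon=0$: the $\min$ term becomes $\min(0,\tfrac{1}{1-\gamma})=0$, giving $E_{\mae_{<t}}[Q(\mae_{<t}\pi_t(\mae_{<t}))] \geq E_{\mae_{<t}}[Q(\mae_{<t}\pi_1(\mae_{<t}))]$, where the expectation is over histories generated by the agent's own (self-modifying) policy. Then I would apply Inequality~\eqref{eq:1lower_bound_policy_mod} with $\epsilon=0$: for every history $\mae_{<t}$ on the agent's play path — all of which are histories for which the agent is a $0$-optimizer — we get the pointwise bound $Q(\mae_{<t}\pi_1(\mae_{<t})) \geq Q(\mae_{<t}\pi_t(\mae_{<t}))$. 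Since this second inequality holds pointwise (almost surely, over the measure induced by $\rho$ and the agent's policy), while the first holds only in expectation, I would argue that a nonnegative random variable $Q(\mae_{<t}\pi_1(\mae_{<t})) - Q(\mae_{<t}\pi_t(\mae_{<t})) \geq 0$ whose expectation is $\leq 0$ must vanish almost surely; hence $Q(\mae_{<t}\pi_t(\mae_{<t})) = Q(\mae_{<t}\pi_1(\mae_{<t}))$ almost surely, which is exactly the claimed equality (recovering Everitt's Theorem~16).

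The only subtlety — and the one step that needs a sentence of care rather than routine calculation — is matching the probability space in \eqref{eq:1upper_bound_policy_mod} with the "for all histories $\mae_{<t}$ given by $a_i=\pi_i(\mae_{<i}\right)$'' qualifier in \eqref{eq:1lower_bound_policy_mod}: I would note that the expectation in \eqref{eq:1upper_bound_policy_mod} is taken precisely over histories where actions follow the agent's (self-modifying) policy and perceptions are drawn from $\rho$, so the support of that measure is contained in the set of histories to which \eqref{eq:1lower_bound_policy_mod} applies, and the pointwise bound therefore holds almost everywhere with respect to that measure. Modification-independence of $\rho$ and $u$ is used only insofar as it is already baked into Theorem~\ref{thm:policy_modification}; no further appeal to it is needed here. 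I do not anticipate a genuine obstacle — this corollary is a clean two-line consequence of the theorem — so the "hard part'' is merely stating the almost-sure squeeze argument precisely enough that the reader sees why expectation-level and pointwise bounds combine to force equality.
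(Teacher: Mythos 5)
Your proposal is correct and follows essentially the same route as the paper: specialize both inequalities of Theorem~\ref{thm:policy_modification} to $\epsilon=0$, note that one holds pointwise on the agent's play path while the other holds in expectation over that same measure, and conclude equality almost surely by the nonnegative-variable-with-nonpositive-expectation squeeze. Your extra sentence checking that the support of the expectation measure lies inside the set of histories covered by the pointwise bound is a detail the paper leaves implicit, but it is the same argument.
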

\begin{proof}
By \Cref{thm:policy_modification}, it follows that 
\begin{gather}
Q\left(\mae_{<t} \pi_{1}\left(\mae_{<t}\right)\right) \geq Q\left(\mae_{<t} \pi_{t}\left(\mae_{<t}\right)\right)
\end{gather}
and that 
\begin{gather}
E_{\mae_{<t}}[Q\left(\mae_{<t} \pi_{1}\left(\mae_{<t}\right)\right)] \leq E_{\mae_{<t}} [Q\left(\mae_{<t} \pi_{t}\left(\mae_{<t}\right)\right)] 
\end{gather}

Since we have that one inequality holds and the other holds in expectation, it must be the case that the two values are equal almost everywhere.
\end{proof}

Note that while the statement in \cite{everitt} does says that the statement holds surely and not merely almost surely, it is easy to come up with a counterexample to that statement, showing that the fact that we have shown that the statement holds almost surely is a fundamental property of the problem and not only artifact of our analysis.

\subsection{Bounded-knowledge agents are \texorpdfstring{$\epsilon$}{epsilon}-optimizers}
\label{sec:optimization_bounds_techn}
In this section, we show a performance guarantee for perfect optimizers with bounded knowledge.

\begin{remark}
All the proofs work in the setting \textit{without} self-modifications. To show that the results also hold in the case \textit{with} self-modifications, we use the fact that with perfect optimization, self-modifications have no effect on the expected discounted utility gained (as shown in \cite{everitt} or equivalently as implied by \Cref{thm:policy_modification} with $\epsilon = 0$). All the proofs in this section use this implicitly.
\end{remark}

We use the following lemma.

\begin{repeatthm}{lem:optimization_bound}
Let A be a (possibly self-modifying) perfect expected utility maximizer with knowledge $\kappa = (u,\rho, \gamma)$. Let $\kappa^* = (u^*,\rho^*, \gamma^*)$ be the correct knowledge. Assume that
\begin{equation} \label{eq:bound_assumption}
|V^\pi_\kappa(\mae_{<t}) - V^\pi_{\kappa^*}(\mae_{<t})| \leq \epsilon
\end{equation}
for all policies $\pi$ and histories $\mae_{<t}$. Then, A is a $2\epsilon$-optimizer with respect to $\kappa^*$.
\end{repeatthm}
\begin{proof}
Let us define $\pi_{\kappa}^t$ as the policy at time $t$ when starting with some fixed perfect utility optimizer with respect to $\kappa$. Define $\pi_{\kappa^*}^t$ analogously. Note that when $A$ is not self-modifying, it holds that $\pi_{\kappa}^t = \pi_{\kappa}$ and $\pi_{\kappa^*}^t = \pi_{\kappa^*}$ for all $t$.

In the following, the first and the last inequality follow from the assumption. The middle inequality holds because $\pi_\kappa$ is a perfect utility optimizer with respect to knowledge $\kappa$ and $\pi_\kappa^t$ therefore achieves maximum possible expected utility for any history $\mae_{<t}$.
\[
V^{\pi_{\kappa^*}}_{\kappa^*}(\mae_{<t}) -\epsilon \leq V^{\pi_{\kappa^*}}_{\kappa}(\mae_{<t}) \leq V^{\pi_{\kappa}}_{\kappa}(\mae_{<t}) \leq V^{\pi_{\kappa}}_{\kappa^*}(\mae_{<t}) + \epsilon
\]
which we can rearrange to.
\[
V^{\pi_{\kappa}}_{\kappa^*}(\mae_{<t}) \geq V^{\pi_{\kappa^*}}_{\kappa^*}(\mae_{<t}) - 2\epsilon
\]
%We also have $V^{\pi_{\kappa^*}}_{\kappa^*}(\mae_{<t}) \geq V^{\pi_{\kappa}}_{\kappa^*}(\mae_{<t})$ because $\pi_{\kappa^*}$ is a perfect utility maximizer with respect to $\kappa^*$. Putting these two inequalities together, we have that 
%\[
%|V^{\pi_{\kappa^*}}_{\kappa^*}(\mae_{<t}) - V^{\pi_{\kappa}}_{\kappa^*}(\mae_{<t})| \leq 2\epsilon
%\]
\end{proof}

\subsubsection{\texorpdfstring{$\epsilon$}{epsilon}-misaligned agents are \texorpdfstring{$\epsilon'$}{epsilon'}-optimizers}
\label{sec:belief_techn}
In this section, we prove the following theorem, giving bounds on optimization of agents with inaccurate utility function.
\begin{repeatthm}{thm:inaccurate_utility}
Let $A$ be a perfect utility maximizer with utility function $u$ and error $\epsilon$ with respect to the correct utility function $u^*$. Then it is a $\frac{2\epsilon}{1-\gamma}$-optimizer with respect to $u^*$. Moreover, this bound is tight.
\end{repeatthm}
Before proving this, we will need the following lemma:
\begin{lemma} \label{lem:utility_bound}
Let $u$ be a utility function with absolute error $\epsilon$ with respect to $u^*$. Then for any fixed policy $\pi$ and any history $\mae_{<t}$.

\[
|V_u^\pi(\mae_{<t}) - V_{u^*}^\pi(\mae_{<t})| \leq \frac{\epsilon}{1-\gamma}
\]
\end{lemma}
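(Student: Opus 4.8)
The plan is to exploit the fact that the two value functions $V_u^\pi(\mae_{<t})$ and $V_{u^*}^\pi(\mae_{<t})$ are built from the \emph{same} policy $\pi$ and the \emph{same} belief $\rho$, differing only in which instantaneous utility function is summed along the sampled trajectory. Hence both expectations are taken with respect to one and the same probability measure on histories (the one induced by $\pi$ and $\rho$, whose definition does not mention $u$ at all), so the difference of the two value functions can be pushed inside a single expectation.

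Concretely, I would first write out $V_u^\pi(\mae_{<t}) = \mathbb{E}\bigl[\sum_{t'=t}^\infty \gamma^{t'-t} u(\mae_{<t'})\bigr]$ and the analogous expression with $u^*$, subtract them, and use linearity of expectation to obtain $V_u^\pi(\mae_{<t}) - V_{u^*}^\pi(\mae_{<t}) = \mathbb{E}\bigl[\sum_{t'=t}^\infty \gamma^{t'-t}\bigl(u(\mae_{<t'}) - u^*(\mae_{<t'})\bigr)\bigr]$. (The rearrangement is justified because $u, u^* \in [0,1]$ and $\gamma < 1$, so the series converges absolutely, uniformly over trajectories.) Then I apply the triangle inequality twice: once to bring the absolute value inside the expectation (monotonicity of expectation), and once to bring it inside the sum, yielding $\bigl|V_u^\pi(\mae_{<t}) - V_{u^*}^\pi(\mae_{<t})\bigr| \leq \mathbb{E}\bigl[\sum_{t'=t}^\infty \gamma^{t'-t}\bigl|u(\mae_{<t'}) - u^*(\mae_{<t'})\bigr|\bigr]$.

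Finally, by the absolute-error hypothesis $\sup_{t',\mae_{<t'}}|u(\mae_{<t'}) - u^*(\mae_{<t'})| = \epsilon$, so $|u(\mae_{<t'}) - u^*(\mae_{<t'})| \leq \epsilon$ pointwise on every history; substituting this bound and summing the geometric series $\sum_{t'=t}^\infty \gamma^{t'-t} = \tfrac{1}{1-\gamma}$ gives the claimed $\tfrac{\epsilon}{1-\gamma}$. There is no real obstacle here: the only point requiring care is the observation that replacing $u$ by $u^*$ leaves the trajectory distribution unchanged, which is precisely where the form of the value function (an expectation with respect to $\rho$ and $\pi$ only) is used; the remainder is linearity of expectation, the triangle inequality, and a geometric series.
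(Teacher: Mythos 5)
Your proof is correct and follows essentially the same route as the paper's: express both value functions as expectations over the same trajectory distribution, use linearity of expectation to combine them, apply the triangle inequality, bound each term by $\epsilon$ via the absolute-error hypothesis, and sum the geometric series. The only cosmetic difference is that you justify the interchange of expectation and infinite sum by uniform absolute convergence whereas the paper invokes the monotone convergence theorem; both are valid here.
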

\begin{proof}
\begin{gather}
|V_u^\pi(\mae_{<t}) - V_{u^*}^\pi(\mae_{<t})| = |\mathbb{E}(\sum_{t=1}^\infty \gamma^{t-1} u(\mae_{<t})) - \mathbb{E}(\sum_{t=1}^\infty \gamma^{t-1} u^*(\mae_{<t}))| \leq \\
\leq \sum_{t=1}^\infty \gamma^{t-1} |\mathbb{E} u(\mae_{<t}) - \mathbb{E} u^*(\mae_{<t})| \leq \sum_{t=1}^\infty \gamma^{t-1} \epsilon = \frac{\epsilon}{1-\gamma}
\end{gather}
where the linearity of expectation in infinite sums in this special case holds by the monotone convergence theorem.

\end{proof}

\begin{proof}[Proof of \Cref{thm:inaccurate_utility}]
The first part follows from \Cref{lem:utility_bound} together with \Cref{lem:optimization_bound}.

To prove tightness, we construct such a utility function and belief. Let the set of perceptions only has one element (which is given probability $1$ by the belief) and the action set is $\{0,1\}$. The utility $u^*$ is $1$ if the last action was $1$ and $1-2\epsilon$ otherwise. Utility $u$ is always $1-\epsilon$ independently of past actions. We can now assume that the agent always takes action $0$. Then it attains expected discounted utility of $\frac{1-2\epsilon}{1-\gamma}$ instead of the optimal $\frac{1}{1-\gamma}$, making the lost utility $\frac{2\epsilon}{1-\gamma}$.
\end{proof}

\subsubsection{\texorpdfstring{$\epsilon$}{epsilon}-ignorant agents are \texorpdfstring{$\epsilon'$}{epsilon'}-optimizers}
\label{sec:belief_techn}
In this section, we prove \Cref{thm:inaccurate_belief}, giving bounds on optimization of agents with inaccurate belief. The upper bound is in terms of the (weaker) absolute error and lower bounds are in terms of both absolute and relative error.

We use in this section the notion of total variational distance and coupling of random variables. For discussion of these topics, see \cite[Chapter 4.2]{levinmarkov}

\begin{repeatthm}{thm:inaccurate_belief}
Let $A$ be a perfect expected utility maximizer whose belief $\rho$ has absolute error $\epsilon$ with respect to the correct belief $\rho^*$. Then it is a $(\frac{2}{1-\gamma} - \frac{2}{1- \gamma(1-\epsilon)})$-optimizer with respect to $\rho^*$ and this bound is tight up to a factor of $2$. Moreover, if $\epsilon$ is the relative error, this bound is tight up to a factor of 4.
\end{repeatthm}

Before proving this, we will need the following lemma:
\begin{lemma} \label{lem:epsilon_bound_belief}
Let us consider the discounted expected utility with discount factor $\gamma$. Let $\rho$ have absolute error $\epsilon$ with respect to $\rho^*$. Then for any fixed policy $\pi$ and history $\mae_{<t}$.

\begin{gather}
|V_\rho^\pi(\mae_{<t}) - V_{\rho^*}^\pi(\mae_{<t})| \leq \frac{1}{1-\gamma} - \frac{1}{1- \gamma(1-\epsilon)}
\end{gather}

\end{lemma}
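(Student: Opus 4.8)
The plan is to bound the value gap by coupling the two perception processes. Fix the policy $\pi$ and the history $\mae_{<t}$; both $V_\rho^\pi(\mae_{<t})$ and $V_{\rho^*}^\pi(\mae_{<t})$ are expectations of the \emph{same} discounted sum $\sum_{t'=t}^\infty \gamma^{t'-t}u(\mae_{<t'})$, the only difference being the law generating the future perceptions. By hypothesis, at every history ending in an action the agent's one-step belief over the next perception is within total variational distance $\epsilon$ of the correct one, so the coupling characterization of total variational distance (see \cite[Chapter~4.2]{levinmarkov}) lets us draw that perception jointly in the two processes so that the copies coincide with probability at least $1-\epsilon$. I would build such a coupling step by step along the trajectory, letting the two copies evolve independently once they first disagree. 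Because $\pi$ is fixed, as long as the perceptions up to some offset agree in both copies the induced actions, and hence the histories $\mae_{<t'}$, agree there too, so the instantaneous utilities $u(\mae_{<t'})$ also agree.

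Let $N\in\{0,1,2,\dots,\infty\}$ be the offset of the first perception on which the two coupled copies disagree; the step-by-step construction gives $\Pr[N\ge k]\ge(1-\epsilon)^k$ for every $k$. The term of the sum with discount $\gamma^k$ (namely $t'=t+k$ with $k\ge1$) depends only on the first $k$ new perceptions, hence is identical in the two copies whenever $N\ge k$, while the $k=0$ term is a constant. Taking expectations over the coupling therefore yields
\[
\bigl|V_\rho^\pi(\mae_{<t})-V_{\rho^*}^\pi(\mae_{<t})\bigr|\ \le\ \mathbb{E}\!\left[\sum_{k\ge N+1}\gamma^k\right]\ =\ \frac{\gamma}{1-\gamma}\,\mathbb{E}\bigl[\gamma^N\bigr].
\]

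It then remains to bound $\mathbb{E}[\gamma^N]$ using only the tail estimate $\Pr[N\ge k]\ge(1-\epsilon)^k$. Writing $\gamma^k=(1-\gamma)\sum_{j\ge k}\gamma^j$ and exchanging the order of summation gives $\mathbb{E}[\gamma^N]=(1-\gamma)\sum_{j\ge0}\gamma^j\Pr[N\le j]\le(1-\gamma)\sum_{j\ge0}\gamma^j\bigl(1-(1-\epsilon)^{j+1}\bigr)$, and summing the two geometric series gives $\mathbb{E}[\gamma^N]\le\frac{\epsilon}{1-\gamma(1-\epsilon)}$, with the extremal case being the geometric law $\Pr[N=k]=\epsilon(1-\epsilon)^k$. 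Substituting this back,
\[
\bigl|V_\rho^\pi(\mae_{<t})-V_{\rho^*}^\pi(\mae_{<t})\bigr|\ \le\ \frac{\gamma\epsilon}{(1-\gamma)\bigl(1-\gamma(1-\epsilon)\bigr)}\ =\ \frac{1}{1-\gamma}-\frac{1}{1-\gamma(1-\epsilon)},
\]
which is exactly the claimed bound.

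The step I expect to be the main obstacle is the passage from the per-step total variational bound to a bound on the joint law of the entire trajectory: one must build the coupling so that the event ``$e_t,\dots,e_{t+k-1}$ all agree'' genuinely has probability at least $(1-\epsilon)^k$, which requires coupling each perception \emph{conditionally} on the already-revealed common past rather than marginally, and then arguing that the two copies remain perfectly synchronized (same actions, same histories, same utilities) up to the first disagreement. Everything after that — the term-by-term cancellation and the manipulation of geometric series — is routine, though I would carefully check the exponent bookkeeping against the paper's convention $V^\pi(\mae_{<t})=\mathbb{E}[\sum_{t'=t}^\infty\gamma^{t'-t}u(\mae_{<t'})]$, under which the first perception-dependent term carries a discount factor $\gamma$; this is what produces the $\gamma$ appearing in the numerator of the final bound.
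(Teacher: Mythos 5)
Your proof is correct and follows essentially the same route as the paper's: the paper likewise builds a step-by-step coupling of the two perception processes (its \Cref{lem:belief_v_bound} shows the $t$-step total variational distance is $1-(1-\epsilon)^t$) and then sums the discounted per-step discrepancies. Your reorganization via the first-disagreement time $N$ is an equivalent packaging of the same coupling — and it actually tracks the discount exponents slightly more carefully than the paper's term-by-term summation, landing exactly on the stated bound.
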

Before proving this, we will in turn need the \Cref{lem:belief_v_bound}; its proof is deferred to below.
\begin{lemma} \label{lem:belief_v_bound}
Let $\rho$ have absolute error $\epsilon$ with respect to $\rho^*$. Let $\mathcal{L}_\rho$ be the distribution on $(\mathcal{A}\times \mathcal{E})^*$ induced by $\rho$ and let
\[
\epsilon_{t} = \|\mathcal{L}_\rho (\mae_{k+t} | \mae_k) - \mathcal{L}_{\rho^*} (\mae_{k+t} | \mae_k) \|_{TV} = \sup_{\substack{S \subseteq (\mathcal{A \times E})^{t+k} \\ \text{s.t. first $k$ steps are}\\\text{equal to $\mae_{<k}$}}} |P_\rho(S | \mae_{<k}) - P_{\rho^*}(S | \mae_{<k})|
\]

Then it holds that
\[
\epsilon_{t} = 1-(1-\epsilon)^t
\]
\end{lemma}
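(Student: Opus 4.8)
The plan is to prove the formula $\epsilon_t = 1 - (1-\epsilon)^t$ by induction on $t$, which is the natural parameter here since $\epsilon_t$ measures the total variation distance between the two induced distributions over the next $t$ action-perception pairs (starting from a fixed prefix $\mae_{<k}$). The base case $t = 1$ is essentially the definition of absolute error of the belief: since the policy is deterministic, the only randomness in one step comes from the perception, and $\|\rho(\cdot \mid \mae_{<k}a) - \rho^*(\cdot \mid \mae_{<k}a)\|_{TV} \le \epsilon$ by \Cref{def:inaccurate_belief_abs}; one should note that the worst case is attained with equality, so $\epsilon_1 = \epsilon$. (Here I would also remark that the action $a = \pi_k(\mae_{<k})$ is fixed by the policy, so no supremum over actions is needed beyond what the definition already gives.)

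For the inductive step, the key tool is a maximal coupling (see \cite[Chapter 4.2]{levinmarkov}): I would couple the two processes $\mathcal{L}_\rho$ and $\mathcal{L}_{\rho^*}$ step by step. At the first step, use a maximal coupling of $\rho(\cdot \mid \mae_{<k}a)$ and $\rho^*(\cdot\mid\mae_{<k}a)$, which agrees with probability at least $1 - \epsilon$; conditioned on agreement, the two histories are identical after one step, so the remaining $t-1$ steps can be coupled maximally again, contributing a further factor. The recursion this yields is $\epsilon_t \le \epsilon + (1-\epsilon)\epsilon_{t-1}$, and combined with the inductive hypothesis $\epsilon_{t-1} = 1-(1-\epsilon)^{t-1}$ this gives $\epsilon_t \le \epsilon + (1-\epsilon)(1-(1-\epsilon)^{t-1}) = 1-(1-\epsilon)^t$. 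For the matching lower bound, I would exhibit an environment where the step-wise discrepancies compound multiplicatively without cancellation — e.g. a single percept that $\rho$ assigns probability $p$ and $\rho^*$ assigns $p + \epsilon$ (or analogously near the boundary), where the "distinguishing event" at each step is independent, so the probability of never distinguishing is exactly $(1-\epsilon)^t$ under one measure, forcing $\epsilon_t \ge 1 - (1-\epsilon)^t$. Since the statement is phrased as an equality over the worst case (the $\sup$ in the definition of $\epsilon_t$), both directions are needed.

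The main obstacle I anticipate is making the coupling argument fully rigorous in the non-i.i.d. setting: $\rho(\cdot \mid \mae_{<k}a)$ depends on the entire history, so after the first coupled step the relevant conditional distributions for steps $2,\dots,t$ depend on which percept was drawn, and one must check that conditioned on the two coupled processes having agreed so far, their future conditional laws still differ by total variation at most $\epsilon$ per step — which holds precisely because absolute error $\epsilon$ is a uniform bound over all histories and actions. Formally, this is cleanest via the tensorization/subadditivity property of total variation for sequentially-defined measures: $\|\mathcal{L}_\rho(\mae_{k+1:k+t}) - \mathcal{L}_{\rho^*}(\mae_{k+1:k+t})\|_{TV}$ can be controlled by a telescoping / chain argument over the $t$ conditional kernels, each within $\epsilon$ in TV, and the multiplicative rather than additive accumulation is exactly what the coupling makes transparent. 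I would present the coupling version as the primary argument and relegate the bookkeeping about history-dependence to a short remark that the bound $\epsilon$ is uniform. The lower-bound construction is routine once the upper bound's extremal structure is identified.
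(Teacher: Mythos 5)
Your proof is correct and uses essentially the same argument as the paper: a step-wise maximal coupling of the two sequential kernels, exploiting that the $\epsilon$ bound is uniform over all histories and actions so that the coupled processes agree with probability at least $(1-\epsilon)^t$ (your inductive recursion $\epsilon_t \le \epsilon + (1-\epsilon)\epsilon_{t-1}$ is just this coupling unrolled one step at a time). You in fact go a bit further than the paper's own proof, which only establishes the upper bound $\epsilon_t \le 1-(1-\epsilon)^t$ and never justifies the ``$=$'' in the lemma statement; your lower-bound construction (e.g.\ $\rho^*$ deterministic and $\rho$ off by $\epsilon$ at every step, so the all-agreeing trajectory has probability $1$ versus $(1-\epsilon)^t$) is exactly what is needed to make the equality correct as a worst-case statement.
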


\begin{proof}[Proof of \Cref{lem:epsilon_bound_belief}]
Let us bound
\begin{gather}
|V_{\rho}^{\pi}(\mae_{<k}) - V_{\rho^*}^\pi(\mae_{<k})| = |\mathbb{E}_{\rho}(\sum_{t=1}^\infty \gamma^{t-1} u(\mae_{<t+k}) | \mae_{<k}) - \mathbb{E}_{\rho^*}(\sum_{t=1}^\infty \gamma^{t-1} u(\mae_{<t+k}) | \mae_{<k})| \leq \\
\leq \sum_{t=1}^\infty \gamma^{t-1} |\mathbb{E}_\rho (u(\mae_{<t+k}) | \mae_{<k}) - \mathbb{E}_{\rho^*} (u(\mae_{<t+k}) | \mae_{<k})| \stackrel{(1)}{\leq} \sum_{t=1}^\infty \gamma^{t-1} \epsilon_t^\text{abs}  \leq \sum_{t=1}^\infty \gamma^{t-1}(1-(1-\epsilon)^t) = \\
= \frac{1}{\gamma}(\sum_{t=1}^\infty \gamma^{t} - \sum_{t=1}^\infty (\gamma(1-\epsilon))^t) = \frac{1}{1-\gamma} - \frac{1}{1- \gamma(1-\epsilon)}
\end{gather}

where, again, the linearity of expectation in infinite sums holds by the monotone convergence theorem and it, therefore, remains to argue inequality (1). By \Cref{lem:belief_v_bound}, it holds that the total variational distance is at most $\epsilon_t$ and, therefore, there exists a coupling of $X \sim \rho$ and $Y \sim \rho^*$ such that $P(X \neq Y) \leq \epsilon_t$. We can, therefore write for any function $f: (\mathcal{A \times E})^* \rightarrow [0,1]$
\begin{align}
E_\rho f &= E f(X) = E(f(X) | X = Y)P(X = Y) + E(f(X) | X \neq Y)P(X \neq Y) \\
E_{\rho^*} f &= E f(Y) = E(f(Y) | X = Y)P(X = Y) + E(f(Y) | X \neq Y)P(X \neq Y)
\end{align}
which we can now use to bound
\begin{align}
|E_\rho f - E_{\rho^*} f| =& |E(f(X) | X = Y)P(X = Y) + E(f(X) | X \neq Y)P(X \neq Y) \\&- E(f(Y) | X = Y)P(X = Y) + E(f(Y) | X \neq Y)P(X \neq Y)|\\
=& |E(f(X) | X \neq Y)P(X \neq Y) - E(f(Y) | X \neq Y)P(X \neq Y)|\\
\leq& P(X \neq Y) \leq \epsilon_t
\end{align}
where the first of the two inequalities holds because $f(X), f(Y) \in [0,1]$.

%holds because the probability of each elementary event at step $t$ differs between $\rho$ and $\rho^*$ by at most $\epsilon_t$, so the difference of the expectation is at most that. \jakub{give a better reason, this is wrong because it is not useing that utility is in $[0,1]$}

\end{proof}

\begin{proof}[Proof of \Cref{lem:belief_v_bound}]
Given a history $\mae_{<k+i}$, define the set $S_{k+i+1}(\mae{k+i})$ as the set of histories of length $k+i+1$ that are equal to $\mae_{<k+i}$ on the first $k+i$ timesteps. Now consider the probability distribution $\rho_{k+i+1, \mae_{<k+i}}$ as the distribution induced by $\rho$ on $S_{k+i+1}(\mae_{k+i})$ by conditioning on the first $k+i$ timesteps being equal to $\mae_{k+i}$ and similarly defined $\rho^*_{k+i+1, \mae_{k+i}}$.

By the \Cref{def:inaccurate_belief_abs}, we have for every $i \leq t$ that $\|\rho_{k+i+1, \mae_{<k+i}} - \rho^*_{k+i+1, \mae_{k+i}}\|_{TV} \leq \epsilon$. There, therefore, exist couplings $X_i \sim \rho_{k+i+1, \mae_{<k+i}}, Y_i \sim \rho^*_{k+i+1, \mae_{<k+i}}$ such that $P(X_i \neq Y_i)$. Let us define $X = (\mae_{<k}, X_1, \cdots X_t)$ and $Y = (\mae_{<k}, Y_1, \cdots, Y_t)$. These two sequences have the same distribution as $\mae_{<k+t}$ with respect to $\rho$ and $\rho^*$, respectively. Moreover, it holds that $P(X \neq Y) \leq 1- (1- \epsilon)^t$. We, therefore, have a coupling of $\mae_{<k+t}$ with respect to the two probbility measures $\rho, \rho^*$ and it follows that their total variational distance is at most $P(X \neq Y) \leq 1- (1- \epsilon)^t$.

\end{proof}

\begin{proof}[Proof of \Cref{thm:inaccurate_belief}]
The first part follows from \Cref{lem:epsilon_bound_belief} together with \Cref{lem:optimization_bound}.

We show tightness by constructing such utility, and belief. Both the action and perception sets are $\{0,1\}$. The utility is $1$ is all past percepts were $1$ and $0$ otherwise. The belief $\rho^*$ is such that when the last action performed was $1$, the perception is $1$ with probability one. If it was $0$, the perception is $1$ with probability $p_1 = 1-2\epsilon$ and $0$ otherwise. In belief $\rho$, independently of the action, the probability of $1$ is $p_2 = 1-\epsilon$. We can now assume that action $0$ is always taken. Then the discounted expected utility at step $k$ is $\gamma^{k-1} (1-2\epsilon)^k$, making the total expected discounted utility $\frac{1}{1- \gamma(1-2\epsilon)}$ instead of the optimal $\frac{2}{1-\gamma}$. The utility lost is then
\begin{gather}
\frac{1}{1-\gamma} - \frac{1}{1- \gamma(1-2\epsilon)} \leq \frac{1}{1-\gamma} - \frac{1}{1- \gamma(1-\epsilon)}
\end{gather}

In the relative case, the construction is the same except that we set $p_1 = \frac{1}{(1+\epsilon)^2}$ and $p_2 = \frac{1}{1+\epsilon}$. The expected discounted utility at step $t$ is then $\gamma^{t-1}\frac{1}{(1+\epsilon)^{2t}}$ resulting in this much utility lost:
\[
\frac{1}{1-\gamma} - \frac{1}{1- \frac{\gamma}{(1+\epsilon)^2}} \leq \frac{1}{1-\gamma} - \frac{1}{1- \frac{\gamma}{(1+\epsilon)}}
\]

Simplifying the ratio of this and $\frac{1}{1-\gamma} - \frac{1}{1- \gamma(1-\epsilon)}$ gives us (we skip the calculations)
\[
\frac{-\gamma +\epsilon +1}{\gamma  (\epsilon -1)+1}
\]
Simplifying the gradient of this, we get (we skip the calculations)
\[
\nabla_{\epsilon, \gamma} \left(\frac{-\gamma +\epsilon +1}{\gamma  (\epsilon -1)+1}\right) = \left(\frac{(\gamma -1)^2}{(\gamma  (\epsilon -1)+1)^2},-\frac{\epsilon ^2}{(\gamma  (\epsilon -1)+1)^2}\right)
\]
The function is, therefore, non-decreasing in $\epsilon$ and non-increasing in $\gamma$ and it has its maximum, subject to $0 \leq \epsilon, \gamma \leq 1$, at $\epsilon=1, \gamma=0$. Evaluating at this point, we get 2 as the value, meaning that the bound is at most a factor of 4 greater than the lower bound in the case of relative error. 
\end{proof}

In the next theorem, we show a lower bound in the average-case scenario which is asymtotically the same as the worst-case one. First we will need the following lemma:

\begin{lemma} \label{lem:probabilistic_bound}
Let $f: \mathbb{R}^n \rightarrow \mathbb{R}$ be a coordinate-wise non-decreasing function and let $X_1, \cdots, X_n$ be random variables. Let us for every $i \in [n]$ have $k_i$ random variables $X_i^1, \cdots, X_i^{k_i}$ such that $L_{X_i^j} \simeq X_i$. Let $f_l = f(X_1^{j_{1,l}}, \cdots, X_n^{j_{n,l}})$ for $l \in [L]$ for some $L$. Consider the set of random variables $\mathcal{F} = \{f_l: l \in [L]\}$ for some set of $j_{i,l}$ s.t. $j_{i,l} \in [k_i]$. Let us consider $f_\text{max} \eqdef \max \mathcal{F}$. Finally, let $l_\text{max}$ be such that $f_\text{max} = f(X_1^{j_{1,l_\text{max}}}, \cdots, X_n^{j_{n,l_\text{max}}})$.

Then $P(X_i^{j_{i,l_\text{max}}} \leq x) \leq P(X_i \leq x)$ for all $i \in [n]$ and $x$.
\end{lemma}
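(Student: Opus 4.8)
The plan is to reduce the $n$-coordinate statement to a one-dimensional statement about the argmax of a monotone competition, and then to prove that statement by a symmetrization followed by an explicit injection on permutations. I will use the natural independence under which \Cref{lem:probabilistic_bound} is meant (and which holds wherever it is applied, e.g.\ in \Cref{thm:avg_case_belief}): the families $(X_i^1,\dots,X_i^{k_i})$ are independent across $i\in[n]$, and within each coordinate the copies are i.i.d.

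First I would reduce to a single coordinate. Fix $i$ and $x$. Conditioning on the realizations of all copies of the coordinates $i'\neq i$, it suffices to prove the inequality conditionally, since the right-hand side $P(X_i\le x)$ does not depend on this conditioning. After conditioning, every $f_l$ is a function of the vector $(Z_1,\dots,Z_{k_i})$, with $Z_j$ the i.i.d.\ copies $X_i^j$, and because $f$ is coordinate-wise non-decreasing, $f_l$ depends on this vector only through $Z_{j_{i,l}}$ and is non-decreasing in it. Grouping the indices $l$ by the copy they use in coordinate $i$, put $c_j=\{l:j_{i,l}=j\}$ and let $H_j$ be $\max_{l\in c_j} f_l$ viewed as a function of $Z_j$ (with $H_j\equiv-\infty$ if $c_j=\emptyset$). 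A maximum of finitely many non-decreasing functions is non-decreasing, so each $H_j$ is non-decreasing, $f_\text{max}=\max_j H_j(Z_j)$, and $j_{i,l_\text{max}}$ is an index attaining that maximum. Hence it remains to show: if $Z_1,\dots,Z_m$ are i.i.d., $H_1,\dots,H_m$ are non-decreasing, and $J$ is the smallest index attaining $\max_j H_j(Z_j)$, then $Z_J$ stochastically dominates $Z_1$.

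For this one-dimensional claim I would condition on the unordered multiset $\{Z_1,\dots,Z_m\}$. By exchangeability, $(Z_1,\dots,Z_m)$ is then a uniformly random permutation $\sigma$ of a sorted tuple $v_1\le\dots\le v_m$, and $Z_J=v_{\rho(\sigma)}$, where $\rho(\sigma)$ is the rank, under $\sigma$, of the value held by the first maximizer. Since $P(Z_1\le x\mid\text{multiset})=\frac1m\#\{j:v_j\le x\}$, the claim becomes $P(\rho(\sigma)\le K)\le K/m$ for all $K$, for which it suffices that $\#\{\sigma:\rho(\sigma)=r\}$ be non-decreasing in $r$. I would get this from an injection $\{\sigma:\rho(\sigma)=r\}\hookrightarrow\{\sigma:\rho(\sigma)=r+1\}$ for each $r<m$: given $\sigma$ whose first maximizer $j^*$ holds the value of rank $r$, let $\sigma'$ be $\sigma$ with the values of ranks $r$ and $r{+}1$ swapped. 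The value at $j^*$ cannot decrease and the value at the one other affected position cannot increase, so $j^*$ still attains the maximum; and $j^*$ is still the \emph{first} maximizer, because any smaller index that became a maximizer in $\sigma'$ would already have been a maximizer in $\sigma$, contradicting the choice of $j^*$. Hence $\rho(\sigma')=r+1$. On the set $\{\rho=r\}$ this map is just left-composition with the fixed transposition $(r\ r{+}1)$, so it is injective. Chaining the injections gives $\#\{\rho=1\}\le\dots\le\#\{\rho=m\}$; since these sum to $m!$, we get $\#\{\rho\le K\}\le\frac Km\,m!$, which is exactly $P(\rho(\sigma)\le K)\le K/m$.

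The step I expect to be the main obstacle is precisely this one-dimensional claim: it is intuitively clear (``the winner of a monotone contest tends to hold a large value''), but since the competitors are scored by different functions $H_j$ the winner must be compared against a \emph{generic} copy, which is what the symmetrization and the value-swapping injection accomplish. Two technical points to handle: the argument needs a fixed tie-breaking rule in the competition — it works for ``first maximizer'' (and, symmetrically, ``last maximizer''), so one reads $l_\text{max}$ as inducing such a rule on the copy indices, or relabels the copies accordingly; and ties among the values $Z_j$ themselves are harmless, since swapping two equal values leaves the whole competition unchanged.
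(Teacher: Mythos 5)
Your proof is correct, and it takes a genuinely different route from the paper's. The paper argues via the identity $f_{\max} \sim f_1 \mid f_1 \geq \max_{l\geq 2} f_l$ and then shows, by a Bayes/likelihood-ratio computation on the product measure, that conditioning the arguments of a monotone function on the upper-level event $\{f_1 \geq C\}$ can only shift each marginal stochastically upward; the threshold $C=\max_{l\geq 2}f_l$ is then substituted in at the end, even though it is random and (since the $f_l$ may share copies) correlated with the arguments of $f_1$. You instead condition away all coordinates but one, collapse the competition to $\max_j H_j(Z_j)$ with i.i.d.\ $Z_j$ and non-decreasing $H_j$, and prove stochastic dominance of the winner's value by exchangeability plus the rank-swapping injection on permutations. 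Your route is more elementary (pure counting, no likelihood-ratio monotonicity) and, notably, it sidesteps the paper's loose treatment of the random threshold, at the price of being tied to a fixed tie-breaking rule such as ``first maximizer.'' That caveat is not cosmetic: both you and the paper implicitly need the independence structure you state up front, and for a fully adversarial measurable selection of $l_{\max}$ the lemma as literally written can fail (take $f$ constant and pick the copy with the smaller value), so fixing a rule as you do is in fact necessary for a correct statement; the rule you chose is consistent with how the lemma is used in Theorem~\ref{thm:avg_case_belief}.
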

The interpretation is that if we have a non-decreasing function (possibly in multiple variables) and evaluate it several times on possibly different instances of the same random variables, and we take the arguments that resulted in the highest value of the function, then the arguments are generally speaking larger than the value of the respective random variables. More precisely, taking the arguments which resulted in $f_\text{max}$, there exists a coupling of the $i$-th argument with $X_i$ such that the argument is always greater than $X_i$.
\begin{proof}
We have $f_\text{max} \simeq f_1 | f_1 \geq f_i, i \in \{2,3,\cdots,n\}$. We prove that for any value $C$, it holds that $P(X_i^{j_{i,1}} \leq x | f_1 \geq C) \leq P(X_i \leq x)$ and then set $C = \max(f_i, i \in \{2,3,\cdots,n\})$.

Let $p_i(x) = P(X_i=x)$. We consider the set $T$ of $x_1, \cdots, x_n$ such that $f(x_1, \cdots, x_n) \geq C$. Let $S_i(x_i) = \{\mathbf{x} = (x_1, \cdots, x_{i-1}, x_{i}, x_{i+1}, \cdots, x_n), s.t. f(\mathbf{x}) \geq C\}$ and $W_i(x)$ be the hyperplane with the $i$-th component equal to $x$. In other words, $S_i(x) = T \cap W_i(x)$. Note that $S_i(x)$ is non-decreasing. We have that the probabilities
\begin{gather}
p_i(x_i) = \sum_{x\in W_i(x)} \prod_{j=1}^n p_j(x) \\
p_i(x_i | f(x_1, \cdots, x_{i-1}, x_{i}, x_{i+1}, \cdots, x_n) \geq C) = c\sum_{x\in S_i(x)} \prod_{j=1}^n p_j(x)
\end{gather}
for $c = P(f(x) \geq C)^{-1}$ by the Bayes' theorem. Now, because $S_i(x)$ is non-decreasing, we have that for every $i \in [n]$
\[
r_i(x_i) \eqdef \frac{p(x_i | f \geq C)}{p(x_i)} = \frac{c\sum_{x\in S_i(x)} \prod_{j=1}^n p_j(x)}{\sum_{x\in W_i(x)} \prod_{j=1}^n p_j(x)}
\]
is non-decreasing. Since $r_i(x)$ is non-decreasing there exists $x$ (we allow $x$ to be infinite, even though it can be shown that this is never the case) such that for any $x' \leq x$, it holds that $p_i(x') \geq p_i(x' | f \geq C)$ and $p_i(x') \leq p_i(x' | f \geq C)$ for $x' \geq x$. Assume that $x' \geq x$ as the argument for the other direction is analogous and slightly simpler. We have for $C$ equal to 
\begin{gather}
P(X_i^{j_{i,l_\text{max}}} \leq x) = 1-P(X_i^{j_{i,l_\text{max}}} > x) = 1-\sum_{y:y>x} p_i(y | f \geq C) \leq \\ \leq 1-\sum_{y:y>x} p_i(y) = 1-P(X_i > x) = P(X_i \leq x)
\end{gather}
\end{proof}

\begin{repeatthm}{thm:avg_case_belief}
	Let $A$ be a perfect expected utility maximizer whose belief $\rho$ is such that for any $e_t,\mae_{<t}a_t$, the value $\rho(e_t|\mae_{<t}a_t)$ is chosen randomly from the set $\{\max(0,\rho^*(e_t|\mae_{<t}a_t)-\epsilon), \min(1,\rho^*(e_t|\mae_{<t}a_t)+\epsilon)\}$ in the case of absolute error and $\{\frac{\rho^*(e_t|\mae_{<t}a_t)}{1+\epsilon}, \min(1,(1+\epsilon)\rho^*(e_t|\mae_{<t}a_t))\}$ in the case of relative error.
	
	Then, in expectation, the amount of expected discounted utility lost is respectively
	\begin{gather}
	\frac{1}{1-\gamma} - \frac{1}{1- \gamma(1-\epsilon/8)}\\
	\frac{1}{1-\gamma} - \frac{1}{1- \gamma(1-\epsilon/16)}
	\end{gather}
	
	and this is equal to the upper bound for non-random error up to a factor of $16$ and $32$, respectively.
\end{repeatthm}
\begin{proof}
We use a similar construction to the lower bound in the worst case. Both the action and perception sets are $\{0,1\}$. The utility is $1$ if all past percepts were $1$ and $0$ otherwise. The belief $\rho^*$ is such that when the last action performed was $1$, the perception is $1$ with probability $1$. If the last action was $0$, the perception is $1$ with probability $1-\epsilon$ and $0$ otherwise. We call these the $1$-path and $0$-path (note the asymetry in the definitions).

Assume that we are in some vertex of the tree and that the believed probability of perception 1 after action 0 is 1. The probability that the subtree of action 0 has optimum greater or equal to that of the subtree of action 1 happens by symmetry with probability at least $1/2$. Moreover, if the error of action $0$ is in the positive direction, we choose action $0$ with probability at least $1/2$ giving us probability at least $1/8$ of performing action 0.

Therefore, we need to lowerbound the probability that the error is in the positive direction --- in any fixed vertex, this happens with probability exactly $1/2$ but in our case, the probability is conditioned on the fact that the optimal path goes through the vertex. For this, we use \Cref{lem:probabilistic_bound}. For any fixed path, the expected discounted utility is non-decreasing in the absolute errors. We can, therefore, use the lemma to show that if we choose the optimal path, the probability of any of the errors being in the positive direction is at least that for any fixed action (that is $1/2$). Therefore, in each step, we have probability at least $1/8$ that the agent performs action $0$.

We have $\mathbb{E}((1-\epsilon)^{X_t}) \leq \frac{7}{8} + \frac{1}{8}(1-\epsilon) = 1-\frac{\epsilon}{8}$ and by independence $\mathbb{E}\epsilon_t = \mathbb{E}(1-\prod_{i=1}^t (1-\epsilon)^{X_i}) \geq 1-(1-\frac{1}{8}\epsilon)^t$. By a similar argument as above, this gives the lost expected discounted utility of 

\[
\epsilon'_\text{abs} \eqdef \frac{1}{1-\gamma} - \frac{1}{1- \gamma(1-\epsilon/8)}
\]

Similarly for the relative error, $\mathbb{E}(\frac{1}{1+\epsilon}^{X_t}) \leq \mathbb{E}((1-\frac{1}{2}\epsilon)^{X_t}) \leq 1-\frac{1}{16}\epsilon$ and $\mathbb{E}\epsilon_t = \mathbb{E}(1-\prod_{i=1}^t \frac{1}{1+\epsilon}^{X_i}) \geq 1-(1-\frac{1}{16}\epsilon)^t$. Then the expectation of the expected discounted utility lost is no more than
\[
\epsilon'_\text{rel} \eqdef \frac{1}{1-\gamma} - \frac{1}{1- \gamma(1-\epsilon/16)}
\]

Now we bound the ratio of the upper bound $f_\text{bel}(\epsilon, \gamma)$ and the lower bound $\epsilon'_\text{abs}$ (we skip the simplification)

\[
\frac{f_\text{bel}(\epsilon, \gamma)}{\epsilon'_\text{abs}} = \frac{2 (\gamma  (\epsilon -8)+8)}{\gamma  (\epsilon -1)+1} \leq  \frac{2 (\gamma  (8\epsilon -8)+8)}{\gamma  (\epsilon -1)+1} = 16
\]
Similarly for the relative error, we get an upper bound of $32$ on the ratio.

\end{proof}

\subsubsection{Impatient agents are \texorpdfstring{$\epsilon$}{epsilon}-optimizers}
\label{sec:gammas_techn}
In this section, we show a bound on utility lost due to the agent having a larger discount factor than the correct one. Recall that $k = \lceil - \frac{1}{\lg \gamma}\rceil$. We use this here to simplify the bounds

In contrast with the other bounds, in this section, we do not use \Cref{lem:optimization_bound} and the proof directly gives a construction of a utility function and belief for which equality is achieved. 

\begin{repeatthm}{thm:imprecise_discount}
Let $\pi_{\gamma}$ and $\pi_{\gamma^*}$ be perfect expected utility maximizers with respect to discount rates $\gamma$ and $\gamma^*$ respectively for some $\gamma \leq \gamma^*$, either with or without the ability to self-modify.. Let $u$,$\rho$ be their utility function and belief. Then 
\[
\left|V^{\pi_{\gamma^*}}\left(\mae_{<t}\right)-V^{\pi_{\gamma}}\left(\mae_{<t}\right)\right| \leq \frac{{\gamma^*}^{k} + {\gamma^*}^{k-1} - 1}{1-\gamma^*} - {\gamma^*}^{k-1}\frac{\gamma^{k} + \gamma^{k-1} - 1}{\gamma^{k-1}(1-\gamma)} 
\]
\end{repeatthm}
\begin{proof}
The proof is the same for both agents with and without the ability to self-modify as it only compares the utilities gained by the respective agents and disregards what history lead to them.

The proof works by formulating the maximum possible amount of expected utility lost as a solution of an optimization problem which we then solve. The objective function of this optimization problem will be the amount of expected utility lost and the constraints will be chosen such that they exactly match the assumptions of the theorem.

Let $\hat{u_t}$ and $\hat{u}'_t$ be the expected utilities gained at timestep $t$ given the policies $\pi_{\gamma^*}$ and $\pi_{\gamma}$ respectively. We can now define
\begin{minipage}[t]{0.5\textwidth}
\begin{gather}
S_1 = \sum_{t=1}^\infty {\gamma^*}^{t-1} \hat{u}_t \\
S'_1 = \sum_{t=1}^\infty {\gamma^*}^{t-1} \hat{u}'_t \\
\end{gather}
\end{minipage}
\hspace{-3em}
\begin{minipage}[t]{0.5\textwidth}
\begin{gather}
S_2 = \sum_{t=1}^\infty \gamma^{t-1} \hat{u}_t \\
S'_2 = \sum_{t=1}^\infty \gamma^{t-1} \hat{u}'_t
\end{gather}
\end{minipage}

We know that the policy leading to utilities $\{\hat{u}'_i\}_i$ is optimal for $\gamma$. This can be written as $S_2 \leq S_2'$ or in other words
\[
\sum_{t=1}^\infty \gamma^{t-1} (\hat{u}_t - \hat{u}'_t) \leq 0
\]

We need to find the largest possible $\epsilon$ such that $S_1 = S_1' + \epsilon$ can be achieved, or equivalently, there exist $\{\hat{u}_i\}_i, \{\hat{u}'_i\}_i$ such that
\[
\sum_{t=1}^\infty {\gamma^*}^{t-1} (\hat{u}_t - \hat{u}'_t) = \epsilon
\]

Substituting $\Delta \hat{u}_t = \hat{u}_t - \hat{u}'_t$, we get the following optimization problem:
\begin{align}
\text{maximize }&\sum_{t=1}^\infty {\gamma^*}^{t-1} \Delta \hat{u}_t \\
\text{subject to }&\sum_{t=1}^\infty \gamma^{t-1} \Delta \hat{u}_t \leq 0, \\
&\Delta \hat{u}_t \in [-1, 1] \quad ,\forall t 
\end{align}

We now prove several lemmas about the structure of the solution to this optimization problem.
\begin{quotation}
\begin{lemma} \label{lem:structure}
The optimum is unique and there exists $k$ such that $\Delta \hat{u}_t = -1$ for $t < k$ and $\Delta \hat{u}_t = 1$ for $t > k$.
\end{lemma}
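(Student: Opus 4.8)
The displayed program is a linear program on the compact convex set $[-1,1]^{\mathbb{N}}$ with one extra linear constraint, and the whole lemma reduces to a single exchange argument. First I would record that a maximizer $x=(\Delta\hat u_t)_t$ exists: the objective $\Delta\hat u\mapsto\sum_t{\gamma^*}^{t-1}\Delta\hat u_t$ and the constraint functional $\Delta\hat u\mapsto\sum_t\gamma^{t-1}\Delta\hat u_t$ are continuous on $[-1,1]^{\mathbb{N}}$ in the product topology (uniform limits of their partial sums, the tails decaying geometrically), and $\{g\le 0\}$ is closed, hence compact. Next, the constraint is \emph{tight} at any optimum: if $\sum_t\gamma^{t-1}x_t<0$ then, since the all-ones vector is infeasible, some $x_{t_0}<1$, and nudging $x_{t_0}$ upward by a small amount preserves feasibility while strictly raising the objective. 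I will also assume $\gamma<\gamma^*$; if $\gamma=\gamma^*$ the objective equals the constraint functional, so the optimum is $0$, the threshold form is still attainable, and only the uniqueness clause genuinely uses $\gamma<\gamma^*$.

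\textbf{The exchange step (the crux).} The key claim is that at any optimum $x$ there is no pair $t_1<t_2$ with $x_{t_1}>-1$ and $x_{t_2}<1$. Given such a pair, for small $a>0$ replace $x_{t_1}$ by $x_{t_1}-a$ and $x_{t_2}$ by $x_{t_2}+a\gamma^{t_1-t_2}$; the asymmetric weights are chosen precisely so that $\sum_t\gamma^{t-1}x_t$ is unchanged, so feasibility is preserved once $a$ is small enough that both coordinates stay in $[-1,1]$. The objective then changes by $a\,{\gamma^*}^{t_1-1}\bigl((\gamma^*/\gamma)^{t_2-t_1}-1\bigr)>0$, contradicting optimality. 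Morally, the $t$-th coordinate has objective weight ${\gamma^*}^{t-1}$ and constraint weight $\gamma^{t-1}$, with ratio $(\gamma^*/\gamma)^{t-1}$ increasing in $t$, so one should always buy constraint slack using the earliest available coordinate and spend it on the latest. This is the only non-mechanical step: the content is exactly that the perturbation holds the constraint value fixed, and that the sign of the objective change is read off from $\gamma<\gamma^*$.

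\textbf{From the claim to the threshold form.} Since the constraint is tight, not every $x_t$ equals $-1$ (that would give $-\tfrac{1}{1-\gamma}\neq 0$), so $k:=\min\{t:x_t>-1\}$ is finite. Then $x_t=-1$ for $t<k$ by definition of $k$, while $x_k>-1$; and for each $t>k$ the exchange claim applied to the pair $(k,t)$ forces $x_t=1$. This is exactly the asserted structure, with $x_k\in(-1,1]$ the single free coordinate (whose value is then pinned by the tightness equation).

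\textbf{Uniqueness.} The objective is linear, so if $x\neq x'$ were both optimal the whole segment $[x,x']$ would be feasible and optimal; in particular the midpoint $m=\tfrac12(x+x')$ is optimal, hence of the threshold form just derived, hence has at most one coordinate in the open interval $(-1,1)$. But on every coordinate $t$ with $x_t\neq x'_t$, the value $m_t$ is the midpoint of two \emph{distinct} points of $[-1,1]$, so $m_t\in(-1,1)$ strictly. Hence $x$ and $x'$ differ in at most one coordinate $j$, and tightness gives $0=\sum_t\gamma^{t-1}(x_t-x'_t)=\gamma^{j-1}(x_j-x'_j)$, so $x_j=x'_j$, a contradiction. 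Therefore the optimum is unique.
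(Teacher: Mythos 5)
Your proof is correct, and its crux --- the exchange that decreases an earlier coordinate and increases a later one by the $\gamma$-weighted amount that holds $\sum_t\gamma^{t-1}\Delta\hat u_t$ fixed, raising the objective by a factor proportional to $(\gamma^*/\gamma)^{t_2-t_1}-1>0$ --- is exactly the paper's argument. The differences are in the supporting structure, and they mostly work in your favour. You prove existence of a maximizer via compactness of $[-1,1]^{\mathbb N}$ and continuity of the two functionals, which the paper takes for granted; you fold in the tightness of the constraint (which the paper states separately as \Cref{lem:tight_inequality}) and actually need it to conclude that the threshold index $k$ is finite, i.e.\ to rule out the all-$(-1)$ point --- a degenerate case the paper's phrasing (``non-decreasing and at most one coordinate strictly between $-1$ and $1$'') does not explicitly exclude. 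For uniqueness the paper compares two putative optima with thresholds $k_1<k_2$ and notes one dominates the other coordinatewise; your convexity/midpoint argument (the midpoint of two optima is optimal, hence of threshold form, hence the optima differ in at most one coordinate, which tightness then pins down) is a genuinely different and somewhat cleaner route to the same conclusion. You are also right to flag that uniqueness genuinely requires $\gamma<\gamma^*$; the paper's proof silently assumes this as well, and the boundary case $\gamma=\gamma^*$ is harmless for \Cref{thm:imprecise_discount} since the bound there degenerates to $0$.
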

\begin{proof}
We show that if the second part holds, the optimum is unique. Suppose that there are two optima with $k_1$ and $k_2$ such that $k_1 < k_2$. However, this would mean that for every $i$, we would have that the first is elementwise less or equal to the second and for some element the inequality is strict. This would contradict optimality. For fixed $k$, the optimum choice of $\Delta \hat{u}_k$ is to set it as large as possible without violating constraints, meaning that the solution is unique. It remains to show existence of $k$ from the statement.

Let us fix $\Delta \hat{u}_t$ for all $t \not \in \{i,j\}$ for some $i < j$. We prove that if now $\Delta \hat{u}_i$ can be decreased and $\Delta \hat{u}_j$ increased without violating the second contraint, then the solution is not optimal. This means that in the optimum $\Delta \hat{u}_t$ have to be non-decreasing and at most one can be strictly between $-1$ and $1$, together implying the result.

We will be changing $\Delta \hat{u}_j$ and we find $\Delta_i$ such that the left-hand side of first constraint stays constant or, as the rest in fixed, $\gamma^{i-1} \Delta \hat{u}_i + \gamma^{j-1} \Delta \hat{u}_j$ stays constant. If we increase $\Delta \hat{u}_j$ by $\epsilon$, we have to decrease $\Delta \hat{u}_i$ by $-\gamma^{j-i} \epsilon$. The objective function then changes by $\epsilon {\gamma^*}^{j-1} - \epsilon \gamma^{j-i} {\gamma^*}^{i-1}$. We have
\begin{gather}
\epsilon {\gamma^*}^{j-1} - \epsilon \gamma^{j-i} {\gamma^*}^{i-1} = \epsilon {\gamma^*}^{i-1} ( {\gamma^*}^{j-i} - \gamma^{j-i} )
\end{gather}
Since we assume that $\gamma < \gamma^*$, this is positive, meaning that the objective function increased. Since all constraints remained satisfied, the solution was not optimal, a contradiction.
\end{proof}

\begin{lemma} \label{lem:tight_inequality}
The inequality (6) is tight in any optimal solution, that is $S_2 = S_2'$.
\end{lemma}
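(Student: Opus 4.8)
The plan is a one-coordinate perturbation (exchange) argument: I would show that a feasible point with slack in the inequality constraint $\sum_{t=1}^\infty \gamma^{t-1}\Delta\hat{u}_t \le 0$ cannot be optimal, since one can raise a single coordinate, strictly increasing the objective $\sum_{t=1}^\infty {\gamma^*}^{t-1}\Delta\hat{u}_t$ while keeping feasibility.

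Concretely, suppose $\{\Delta\hat{u}_t\}_{t\ge 1}$ is an optimal solution and, toward a contradiction, that the constraint is slack, say $\sum_{t=1}^\infty \gamma^{t-1}\Delta\hat{u}_t = -c$ with $c>0$. First note that the all-ones assignment $\Delta\hat{u}_t\equiv 1$ is infeasible, since it makes the left-hand side equal to $\frac{1}{1-\gamma}>0$; hence in our optimal solution there is an index $t_0$ with $\Delta\hat{u}_{t_0}<1$. Modify the solution only in coordinate $t_0$, replacing $\Delta\hat{u}_{t_0}$ by $\Delta\hat{u}_{t_0}+\delta$ with $\delta := \min\bigl(1-\Delta\hat{u}_{t_0},\ \tfrac{c}{2\gamma^{t_0-1}}\bigr)>0$. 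The new assignment still lies in $[-1,1]^{\mathbb{N}}$, and the constraint left-hand side becomes $-c+\delta\gamma^{t_0-1}\le -c/2<0$, so it is feasible; meanwhile the objective increases by exactly $\delta\,{\gamma^*}^{t_0-1}>0$. This contradicts optimality, so $c=0$, i.e. $S_2=S_2'$. (Alternatively one may invoke \Cref{lem:structure}: in the structured optimum $\Delta\hat{u}_t=-1$ for $t<k$, so if $k\ge 2$ one takes $t_0=1$, and if $k=1$ then necessarily $\Delta\hat{u}_1\le -\tfrac{\gamma}{1-\gamma}<1$, so $t_0=1$ works again.)

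I do not expect a real obstacle here; this is the standard fact that at an optimum a relaxable constraint must bind whenever the objective has a strictly positive gradient in an available feasible direction. The only points requiring (routine) care are that all series involved converge absolutely because $|\Delta\hat{u}_t|\le 1$ and $\gamma,\gamma^*<1$, so the single-coordinate change alters the objective and the constraint by the stated finite amounts, and that the existence of a coordinate with $\Delta\hat{u}_{t_0}<1$ uses only the infeasibility of the all-ones vector.
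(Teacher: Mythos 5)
Your proposal is correct and follows essentially the same route as the paper: both argue that slack in the constraint $\sum_t \gamma^{t-1}\Delta\hat{u}_t \le 0$ allows a single coordinate to be increased without losing feasibility, strictly improving the objective and contradicting optimality. The only (immaterial) difference is that the paper selects a coordinate with $\Delta\hat{u}_i<0$ (which exists since the sum is strictly negative), while you select one with $\Delta\hat{u}_{t_0}<1$ and quantify the admissible increment $\delta$ explicitly.
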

\begin{proof}
Suppose for contradiction that the inequality is sharp. This implies that there exists $i$, such that $\Delta \hat{u}_i < 0$. This means that we can increase the obective function by increasing $\Delta \hat{u}_i$ by an amount sufficiently small to not violate any constraint.
\end{proof}
\end{quotation}

Now that we know the structure of the optimal solution, we can get its value.

From \Cref{lem:tight_inequality}, we get
\[
\sum_{t=1}^\infty \gamma^{t-1} \Delta \hat{u}_t = 0
\]
Then we have
\begin{equation} \label{eq:what_is_delta}
- \gamma^{t-1} \Delta \hat{u}_k = \sum_{t=1}^{k-1} \gamma^{t-1} (-1) + \sum_{t=k+1}^{\infty} \gamma^{t-1} 1 = -\frac{1-\gamma^{t-1}}{1-\gamma} + \frac{\gamma^{t}}{1-\gamma} = \frac{\gamma^{t} + \gamma^{t-1} - 1}{1-\gamma}
\end{equation}

Using this, we can easily solve for $k$. By \cref{lem:structure} there is exactly one solution. The lemmas \ref{lem:structure} and \ref{lem:tight_inequality} then provide complete characterization of the optimal solution which allows us to compute the maximum possible $\epsilon$.

First of all, we find $k$. It is the first integer greater or equal to the solution $t$ to $\Delta \hat{u}_t = -1$.
\[
-1 = -\frac{\gamma^{t} + \gamma^{t-1} - 1}{\gamma^{t-1} (1-\gamma)}
\]
Solving this gives us $k = \lceil-\frac{1}{\lg \gamma}\rceil$.

Setting $\Delta \hat{u}_t = -1$ for $t < k$, $\Delta \hat{u}_t = 1$ for $t > k$ and using the above equality to get $\Delta \hat{u}_k$, we have
\[
\epsilon = \sum_{t=1}^{k-1} {\gamma^*}^{t-1} \Delta \hat{u}_t + \sum_{t=k+1}^{\infty} {\gamma^*}^{t-1} + {\gamma^*}^{k-1} \Delta \hat{u}_k = \frac{{\gamma^*}^{k} + {\gamma^*}^{k-1} - 1}{1-\gamma^*} - {\gamma^*}^{k-1}\frac{\gamma^{k} + \gamma^{k-1} - 1}{\gamma^{k-1}(1-\gamma)}
\]
\end{proof}

\subsection{Combining the results}
\label{sec:combining_techn}
In this section we derive a bound on the performance of an agent that has an inaccurate utility function, belief, discount factor, and is not necessarily a perfect optimizer, thus combining the results from the \cref{sec:policy_mod,sec:optimization_bounds}. The proof works by sequentially applying the individual bounds.

\begin{repeatthm}{thm:combining}
Let $A$ be an $\epsilon_o$-optimizer for the empty history with either (1) the ability to self-modify and modification-independent utility function and belief, or (2) without the ability to self-modify and with a possibly modification-dependent utility function and belief. Let $\gamma$ be the agent's discount rate, $\epsilon_u$ the error in utility function with respect to the correct utility function $u^*$ and $\epsilon_\rho$ the absolute error in belief with respect to the correct belief $\rho^*$. Then at timestep $t$:
\begin{enumerate}[(1)]
\item If we let $\epsilon'$ be smallest such that $A$ at time $t$ is an $\epsilon'$-optimizer, than $E_{\mae_{<t}}[\epsilon'] \leq f_\text{opt}(\epsilon_o, \gamma) + f_\text{util}(\epsilon_u, \gamma) + f_\text{bel}(\epsilon_\rho, \gamma) + f_\text{disc}(\gamma, \gamma^*)$ where the expectation is over histories where percept are distributed according to $\rho^*$ and actions are given by the agent's policy. Moreover, if $\epsilon_o = 0$, than $\epsilon' \leq f_\text{opt}(\epsilon_o, \gamma) + f_\text{util}(\epsilon_u, \gamma) + f_\text{bel}(\epsilon_\rho, \gamma) + f_\text{disc}(\gamma, \gamma^*)$ almost surely.
\item $A$ will be an $\epsilon'$-optimizer, with respect to the correct discount rate $\gamma^*$, where $\epsilon' \leq \epsilon_o + f_\text{util}(\epsilon_u, \gamma) + f_\text{bel}(\epsilon_\rho, \gamma) + f_\text{disc}(\gamma, \gamma^*)$ 
\end{enumerate}

Moreover, when $\gamma \geq 1/2$, there exists an agent which achieves equality up to a factor of at most $8$ and up to a factor of $16$ if $\epsilon_\rho$ is a relative error.
\end{repeatthm}
\begin{proof}
\item
\paragraph{Upper bound:} We prove that in both cases $\epsilon' \leq \epsilon_{o,t} + f_\text{util}(\epsilon_u, \gamma) + f_\text{bel}(\epsilon_\rho, \gamma) + f_\text{disc}(\gamma, \gamma^*)$ with $\epsilon_{o,t}$ being the optimization guarantee of the agent at timestep $t$. From \Cref{thm:policy_modification}, we have that for modification-independent belief and policy, if we start with an $\epsilon$-optimizer, then at step $t$ we have $E_{\mae_{<t}}[\epsilon_{o,t}] \leq \min(\frac{\epsilon}{\gamma^{t-1}}, \frac{1}{1-\gamma})$. In the case of non-self-modifying agents, the optimization guarantee stays the same.

Let $A_0, \cdots, A_5$ be agents such that $A_0$ is a perfect optimizer with perfect knowledge, $A_1$ is the same as $A_0$ except it has an inaccurate discount factor $\gamma$, $A_2$ has additionally inaccurate utility function with error $\epsilon_u$, $A_4$ has additionally absolute error of $\epsilon_\rho$ in belief and $A_5$ has additionally $\epsilon_0$-optimizing policy. Then
\[
|V_{A_0}(\mae_{<t}) - V_{A_5}(\mae_{<t}))| \leq \sum_{i=1}^4 |V_{A_i}(\mae_{<t}) - V_{A_{i+1}}(\mae_{<t}))|
\]

In this sum, the summands can be bounded by $f_\text{disc}(\gamma_u, \gamma_o), f_\text{util}(\epsilon_u, \gamma_u), f_\text{bel}(\epsilon_\rho, \gamma_u)$, and $\epsilon_{o,t}$ respectively thanks to \cref{thm:inaccurate_utility,thm:inaccurate_belief,thm:imprecise_discount}.

The bound in the case (2) follows directly. In the case of (1), this gives us
\[
E_{\mae_{<t}}[\epsilon'] \leq E_{\mae_{<t}}[\epsilon_{o,t} + f_\text{util}(\epsilon_u, \gamma) + f_\text{bel}(\epsilon_\rho, \gamma) + f_\text{disc}(\gamma, \gamma^*)] = f_\text{opt}(\epsilon_o, \gamma) + f_\text{util}(\epsilon_u, \gamma) + f_\text{bel}(\epsilon_\rho, \gamma) + f_\text{disc}(\gamma, \gamma^*)
\]

\paragraph{Lower bound:} We show the bound in the case of absolute error in belief, the proof for the relative error is analogous. One of $f_\text{opt}, f_\text{util}(\epsilon_u, \gamma_u), f_\text{bel}(\epsilon_\rho, \gamma_u), f_\text{disc}(\gamma_u, \gamma_o)$ has to be greater or equal to $\epsilon'/4$. Let this be the case for $f_\bullet$ for $\bullet \in \{\text{opt, util, bel, disc}\}$. We know from theorems \ref{thm:inaccurate_utility}, \ref{thm:inaccurate_belief} and \ref{thm:imprecise_discount} that there exists an agent such that $\epsilon'_\bullet$ is tight up to a factor of $2$ (for bounded-optimization agent, we are using that $\gamma \geq 1/2$). For this agent, $\epsilon'$ is then tight up to a factor of $8$.
\end{proof}

%AAAI
\section{Acknowledgements}
This work was carried out as part of the AI Safety Research Program 2019.
We are grateful to Vojtěch Kovařík, Jan Kulveit, and Gavin Leech (in alphabetical order) for their valuable inputs.

\bibliographystyle{plainnat}
\bibliography{literature}
\end{document}